\def\shownotes{1}  
\newcommand{\authnote}[2]{{$\ll$\textsf{\footnotesize #1 notes: #2}$\gg$}}
\newcommand{\authnote}[2]{}
\newcommand{\dist}{\mathcal{D}}
\newcommand{\ellzo}{\ell_{0-1}}
\newcommand{\distp}{\mathcal{P}}
\newcommand{\distq}{\mathcal{Q}}
\DeclarePairedDelimiterX{\inp}[2]{\langle}{\rangle}{#1, #2}
\newcommand{\dtv}{d_{\mathrm{TV}}}
\newcommand{\langs}{\mathcal{L}}
\newcommand{\truetranslator}[2]{f_{#1\to #2}^*}
\newcommand{\err}{\mathrm{Err}}
\newcommand{\enc}[1]{\mathbf{E}_{#1}}
\newcommand{\dec}[1]{\mathbf{D}_{#1}}
\theoremstyle{definition}
\newtheorem{definition}{Definition}[section]
\newtheorem{theorem}{Theorem}[section]
\newtheorem{assumption}{Assumption}[section]
\newtheorem{proposition}{Proposition}[section]
\newtheorem{corollary}{Corollary}[section]
\def\thm@space@setup{%
  \thm@preskip=\parskip \thm@postskip=0pt
}
\newcommand{\covering}{\mathcal{N}}
\newcommand{\Nat}{\mathbb{N}}
\newcommand{\Real}{\mathbb{R}}
\newcommand{\xx}{\mathbf{x}}
\newcommand{\zz}{\mathbf{z}}
\newcommand{\xxspace}{\mathcal{X}}
\newcommand{\zzspace}{\mathcal{Z}}
\newcommand{\Exp}{\mathbb{E}}
\newcommand{\risk}{\varepsilon}
\newcommand{\id}{\mathrm{id}}
\newcommand{\emprisk}{\widehat{\varepsilon}}
\newif\iflinear
\title{On Learning Language-Invariant Representations \\ for Universal Machine Translation}
\author{Han Zhao\thanks{Carnegie Mellon University, han.zhao@cs.cmu.edu},~Junjie Hu\thanks{Carnegie Mellon University, junjieh@cs.cmu.edu},~Andrej Risteski\thanks{Carnegie Mellon University, aristesk@andrew.cmu.edu}} 
\begin{document}
\maketitle

\begin{abstract}
The goal of universal machine translation is to learn to translate between any pair of languages, given a corpus of paired translated documents for \emph{a small subset} of all pairs of languages. Despite impressive empirical results and an increasing interest in massively multilingual models, theoretical analysis on translation errors made by such universal machine translation models is only nascent. 

In this paper, we formally prove certain impossibilities of this endeavour in general, as well as prove positive results in the presence of additional (but natural) structure of data. 

For the former, we derive a lower bound on the translation error in the many-to-many translation setting, which shows that any algorithm aiming to learn shared sentence representations among multiple language pairs has to make a large translation error on at least one of the translation tasks, if no assumption on the structure of the languages is made. 

For the latter, we show that if the paired documents in the corpus follow a natural \emph{encoder-decoder} generative process, we can expect a natural notion of ``generalization'': a linear number of language pairs, rather than quadratic, suffices to learn a good representation. Our theory also explains what kinds of connection graphs between pairs of languages are better suited: ones with longer paths result in worse sample complexity in terms of the total number of documents per language pair needed. 

We believe our theoretical insights and implications contribute to the future algorithmic design of universal machine translation.
\end{abstract}

\section{Introduction} 
Despite impressive improvements in neural machine translation (NMT), training a large multilingual NMT model with hundreds of millions of parameters usually requires a collection of \emph{parallel corpora} at a large scale, on the order of millions or even billions of aligned sentences~\citep{johnson2017google,arivazhagan2019massively} for supervised training. Although it is possible to automatically crawl the web~\citep{nie1999cross,resnik-1999-mining,resnik-smith-2003-web} to collect parallel sentences for high-resource language pairs such as German-English and Chinese-English, it is often infeasible or expensive to manually translate large amounts of documents for low-resource language pairs, e.g., Nepali-English, Sinhala-English~\citep{guzman-etal-2019-flores}. Much recent progress in low-resource machine translation, has been driven by the idea of \emph{universal machine translation} (UMT), also known as \emph{multilingual machine translation}~\citep{zoph-knight-2016-multi,johnson2017google,gu2018universal}, which aims at training one single NMT to translate between multiple source and target languages. Typical UMT models leverage either a single shared encoder or language-specific encoders to map all source languages to a shared space, and translate the source sentences to a target language by a decoder. Inspired by the idea of UMT, there has been a recent trend towards learning language-invariant embeddings for multiple source languages in a shared latent space, which eases the cross-lingual generalization from high-resource languages to low-resource languages on many tasks, e.g., parallel corpus mining~\citep{schwenk-2018-filtering,artetxe2019massively}, sentence classification~\citep{conneau-etal-2018-xnli}, cross-lingual information retrieval~\citep{litschko2018unsupervised}, and dependency parsing~\citep{kondratyuk-straka-2019-75}, just to name a few.

The idea of finding an abstract ``lingua franca'' is very intuitive and the empirical results are impressive, yet theoretical understanding of various aspects of universal machine translation is limited. In this paper, we particularly focus on two basic questions: 
\begin{enumerate}
    \item \emph{How can we measure the inherent tradeoff between the quality of translation and how language-invariant a representation is?}
    \item \emph{How many language pairs do we need aligned sentences for, to be able to translate between any pair of languages?}
\end{enumerate}

Toward answering the first question, we show that in a completely assumption-free setup on the languages and distribution of the data, it is impossible to avoid making a large translation error on at least one pair of the translation tasks. Informally we highlight our first theorem as follows, and provide the formal statements in Theorems~\ref{thm:lowerbound} and \ref{thm:manyone}.

\begin{theorem}[Impossibility, Informal] 
There exist a choice of distributions over documents from different languages, s.t.\ for any choice of maps from the language to a common representation, at least one of the translation pairs must incur a high cost. In addition, there is an inherent tradeoff between the translation quality and the degree of representation invariance w.r.t.\ languages: the better the language invariance, the higher the cost on at least one of the translation pairs.
\end{theorem}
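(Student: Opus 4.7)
The plan is to argue via a clean triangle-plus-data-processing chain in the many-to-one regime, where two source languages $\langs_1, \langs_2$ are fed through encoders $\enc{1}, \enc{2}$ into a shared latent space and then decoded to a common target $\langs_3$ via a single $\dec{3}$. The conceptual driver is that, since the \emph{same} decoder acts on both latent distributions, the target-side output distributions cannot be farther apart than the latent-side distributions; that contraction is the quantitative form of ``language invariance implies indistinguishability after decoding.''

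First I would set up the objects: let $\distp_i := \enc{i} \# \dist_i$ be the latent pushforward, $\distq_i := \dec{3} \# \distp_i$ the induced target distribution, and $\distq_i^\star := \truetranslator{i}{3} \# \dist_i$ the target distribution produced by the ground-truth translator. A standard coupling argument shows that the $0/1$ translation error $\risk_{i \to 3}$ upper bounds $\dtv(\distq_i, \distq_i^\star)$, since the two outputs can only disagree on the event $\{\dec{3}(\enc{i}(x)) \neq \truetranslator{i}{3}(x)\}$, whose mass is exactly $\risk_{i\to 3}$. The key inequality is then obtained by applying the triangle inequality and data processing (under the shared $\dec{3}$) to $\dtv$:
\begin{align*}
\dtv(\distq_1^\star, \distq_2^\star)
&\le \dtv(\distq_1^\star, \distq_1) + \dtv(\distq_1, \distq_2) + \dtv(\distq_2, \distq_2^\star) \\
&\le \risk_{1 \to 3} + \dtv(\distp_1, \distp_2) + \risk_{2 \to 3}.
\end{align*}
Rearranging yields $\risk_{1\to 3} + \risk_{2\to 3} \ge \dtv(\distq_1^\star, \distq_2^\star) - \dtv(\distp_1, \distp_2)$, which is exactly the claimed tradeoff: as the representations become more language-invariant (the second term shrinks), the sum of the two translation errors is forced up whenever the two ground-truth induced target distributions differ.

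To complete the impossibility, I would exhibit sources and ground-truth translators for which $\dtv(\distq_1^\star, \distq_2^\star)$ is close to $1$: take $\dist_1$ and $\dist_2$ concentrated on disjoint vocabulary whose correct translations into $\langs_3$ are distinct tokens, so that $\distq_1^\star$ and $\distq_2^\star$ have (almost) disjoint supports. Combined with any hypothesis forcing $\dtv(\distp_1, \distp_2) \approx 0$, one of $\risk_{1\to 3}, \risk_{2\to 3}$ must be at least $\tfrac{1}{2} - o(1)$. The main obstacle is the choice of distance: $\dtv$ is the right object because it simultaneously satisfies the triangle inequality, contracts under arbitrary (even randomized) decoders, and admits a direct coupling bound from the $0/1$ error; moving to $\jsd$ or $\kl$ would import Pinsker-type slack and muddy the tradeoff. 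A secondary subtlety is that $\dec{3}$ could be stochastic, but $\dtv$ is contractive under every Markov kernel, so the data-processing step goes through unchanged, and the extension from two source languages to general many-to-many follows by specializing to the worst pair.
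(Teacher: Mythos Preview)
Your proposal is correct and follows essentially the same approach as the paper: the paper's Theorem~\ref{thm:lowerbound} is proved by exactly the triangle-inequality-on-$\dtv$ plus data-processing chain you describe, with your coupling bound $\dtv(\distq_i,\distq_i^\star)\le\risk_{i\to 3}$ being the content of the paper's Lemma~\ref{lemma:key}, and your latent-side term $\dtv(\distp_1,\distp_2)$ playing the role of the paper's $\epsilon$ in the definition of $\epsilon$-universal language mapping. Your explicit disjoint-support construction and the remark on stochastic decoders go slightly beyond what the paper spells out, but the core argument is the same.
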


To answer the second question, we show that under fairly mild generative assumptions on the aligned documents for the pairwise translations, it is possible to not only do well on all of the pairwise translations, but also be able to do so after \emph{only seeing} aligned documents of a \emph{linear} number of languages, rather than a \emph{quadratic} one. We summarize the second theorem as follows, and provide a formal statement in Theorem~\ref{thm:main}.

\begin{theorem}[Sample complexity, Informal] Under a generative model where the documents for each language are generated from a ``ground-truth'' encoder-decoder model, after seeing aligned documents for a \emph{linear} number of pairs of languages, we can learn encoders/decoders that perform well on any unseen language pair. 
\end{theorem}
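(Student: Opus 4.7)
The plan is to combine a standard empirical risk minimization (ERM) argument for each observed language pair with a combinatorial ``chaining'' argument that transfers guarantees from observed pairs to unseen ones, exploiting the assumed encoder--decoder generative structure. I would first formalize the generative model: for each language $k \in [K]$ there is a ground-truth encoder $E_k^* : \xxspace_k \to \zzspace$ and decoder $D_k^* : \zzspace \to \xxspace_k$, aligned documents for pair $(i,j)$ are produced by drawing a latent $z$ from a fixed prior on $\zzspace$ and emitting $(D_i^*(z), D_j^*(z))$, and the true translator is $\truetranslator{i}{j} = D_j^* \circ E_i^*$. The observed language pairs form a graph $G = (V,E)$ with $|V| = K$ and $n$ aligned samples per edge; the learner outputs encoders/decoders $\{(\hat{\mathbf{E}}_k, \hat{\mathbf{D}}_k)\}_{k=1}^K$ minimizing the empirical translation loss summed over observed edges.

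Step one is a per-edge uniform-convergence bound. For each $(i,j) \in E$, standard Rademacher / covering-number arguments over the hypothesis class of encoders and decoders yield, with high probability,
\[
\risk_{i\to j}(\hat{\mathbf{D}}_j \circ \hat{\mathbf{E}}_i) \;\leq\; \risk_{i\to j}^\star + O\!\left(\sqrt{\covering/n}\right),
\]
where $\covering$ is a complexity measure (e.g., log covering number) of the encoder/decoder class. Under the generative assumption the realizable minimum $\risk_{i\to j}^\star$ is zero, so ERM drives the empirical loss on every observed edge to $O(\sqrt{\covering/n})$.

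Step two is the chaining argument that converts per-edge bounds into a guarantee for an arbitrary \emph{unseen} pair $(i,j) \notin E$. Given a path $i = v_0, v_1, \ldots, v_\ell = j$ in $G$, I would write
\[
\hat{\mathbf{D}}_j \circ \hat{\mathbf{E}}_i \;-\; \truetranslator{i}{j} \;=\; \sum_{t=0}^{\ell-1} \bigl(\hat{\mathbf{D}}_{v_{t+1}} \circ \hat{\mathbf{E}}_{v_t} \;-\; \truetranslator{v_t}{v_{t+1}}\bigr)\ \text{composed through intermediate languages},
\]
and telescope using the fact that for an intermediate language $v$ the learned pair $(\hat{\mathbf{E}}_v, \hat{\mathbf{D}}_v)$ is constrained to be an approximate inverse on the relevant support (this is forced by the edges of $G$ incident to $v$, not by any autoencoder constraint). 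Assuming the hypothesis class is Lipschitz and the latent prior has sufficient support, each substitution $\hat{\mathbf{D}}_{v_t} \to D_{v_t}^*$ introduces an additive error bounded by the per-edge guarantee times a Lipschitz constant, giving a bound of order $\ell \cdot L^{\ell} \sqrt{\covering/n}$ on the unseen pair. Choosing $G$ to be a spanning structure with $|E| = O(K)$ edges (e.g., a star or a bounded-depth tree, which keeps $\ell = O(1)$ or $O(\log K)$) yields total sample complexity $O(K\cdot n)$ rather than $O(K^2 \cdot n)$, and simultaneously makes explicit that graphs of large diameter amplify the error --- which is exactly the dependence on path length the paper advertises.

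The hard part will be step two. The encoders and decoders are identifiable only up to an automorphism of $\zzspace$, so I cannot hope to argue $\hat{\mathbf{E}}_k \approx E_k^*$ pointwise; the argument must be carried out entirely in terms of the observable compositions $\hat{\mathbf{D}}_\ell \circ \hat{\mathbf{E}}_k$, which are the only quantities that the data constrain. The second subtlety is ensuring that when I pass through an intermediate language $v$, the latent codes $\hat{\mathbf{E}}_i(x)$ actually land in the region on which $\hat{\mathbf{D}}_v$ has been ``trained'', so that applying $\hat{\mathbf{E}}_v$ afterwards behaves as a near-inverse; I expect this to require (i) a Lipschitz assumption on the hypothesis class, (ii) a full-support or density condition on the latent prior, and (iii) a careful accounting of how Lipschitz constants compound along the chaining path, which is ultimately what forces the dependence on graph diameter in the final sample-complexity bound.
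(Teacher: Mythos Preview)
Your high-level strategy coincides with the paper's: per-edge uniform convergence via covering numbers (your Step one is exactly their Lemma~\ref{thm:covering}/Theorem~\ref{thm:finite}), followed by a telescoping argument along a path in the graph to reach unseen pairs.

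The substantive difference is in how the telescope is set up, and this is precisely what resolves the two difficulties you flag as ``hard''. You anticipate a bound of order $\ell\cdot L^{\ell}\sqrt{\covering/n}$ and worry about (i) landing out of distribution at intermediate nodes and (ii) Lipschitz constants compounding along the path. The paper sidesteps both with a specific \emph{hybrid} interpolation: for a path $L'=L_1,\dots,L_m=L$ it sets
\[
I_k \;=\; E_{L_1}^{-1}\circ E_{L_k}\;\circ\;\mathbf{E}_{L_k}^{-1}\circ\mathbf{E}_{L_m},
\]
i.e.\ learned maps on the prefix and \emph{ground-truth} maps on the suffix. Then $I_k-I_{k+1}$ factors as $E_{L_1}^{-1}\circ E_{L_k}$ applied to the single-edge discrepancy $\bigl(\mathbf{E}_{L_k}^{-1}\circ\mathbf{E}_{L_{k+1}}-E_{L_k}^{-1}\circ E_{L_{k+1}}\bigr)$, precomposed with $\mathbf{E}_{L_{k+1}}^{-1}\circ\mathbf{E}_{L_m}$. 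The ground-truth precomposition is absorbed by a change of measure---it sends ${\dec{L_m}}_\sharp\dist$ exactly to ${\dec{L_{k+1}}}_\sharp\dist$, the training distribution for edge $(L_k,L_{k+1})$, so there is no out-of-support issue and no need for a full-support assumption on the prior. And only the two outer learned maps contribute Lipschitz factors, so each term costs $\rho^2\epsilon_{L_k,L_{k+1}}$ rather than $\rho^{O(k)}\epsilon$. The final bound is $2\rho^2\sum_{k}\epsilon_{L_k,L_{k+1}}$, linear in path length. Your version would still prove the informal statement for bounded-diameter graphs (star, shallow tree), but it is this hybrid interpolation---not a generic substitution argument---that delivers the linear-in-diameter dependence the paper advertises and eliminates the need for the auxiliary conditions you list.
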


\paragraph{Notation and Setup}
We first introduce the notation used throughout the paper and then briefly describe the problem setting of universal machine translation. 

We use $\langs$ to denote the set of all possible languages, e.g., $\displaystyle \{\text{English}, \text{French}, \text{German}, \text{Chinese}, \ldots\}$. For any language $L\in\langs$, we associate with $L$ an alphabet $\Sigma_L$ that contains all the symbols from $L$. Note that we assume $|\Sigma_L|<\infty$, $\forall L\in\langs$, but different languages could potentially share part of the alphabet. Given a language $L$, a sentence $x$ in $L$ is a sequence of symbols from $\Sigma_L$, and we denote $\Sigma_L^*$ as the set of all sentences generated from $\Sigma_L$. Note that since in principle different languages could share the same alphabet, to avoid ambiguity, for each language $L$, there is a unique token $\langle L\rangle\in\Sigma_L$ and $\langle L\rangle\not\in\Sigma_L', \forall L'\neq L$. The goal of the unique token $\langle L \rangle$ is used to denote the source sentence, and a sentence $x$ in $L$ will have a unique prefix $\langle L\rangle$ to indicate that $x\in\Sigma_L^*$. Also, in this manuscript we will use sentence and string interchangeably. 

Formally, let $\{L_i\}_{i\in[K]}$\footnote{We use $[K]$ to denote the set $\{0, 1, \ldots, K-1\}$.} be the set of $K$ source languages and $L\not\in\{L_i\}_{i\in[K]}$ be the target language we are interested in translating to. For a pair of languages $L$ and $L'$, we use $\dist_{L, L'}$ to denote the joint distribution over the parallel sentence pairs from $L$ and $L'$. Given this joint distribution, we also use $\dist_{L,L'}(L)$ to mean the marginal distribution over sentences from $L$. Likewise we use $\dist_{L,L'}(L')$ to denote the corresponding marginal distribution over sentences from $L'$. Finally, for two sets $A$ and $B$, we use $A\sqcup B$ to denote the disjoint union of $A$ and $B$. In particular, when $A$ and $B$ are disjoint, their disjoint union equals the usual set union, i.e., $A\sqcup B = A\bigcup B$.

\section{Related Work} 
\paragraph{Multilingual Machine Translation}
Early studies on multilingual machine translation mostly focused on pivot methods~\citep{och2001statistical,cohn-lapata-2007-machine,de2006catalan,utiyama-isahara-2007-comparison} that use one pivot language to connect the translation between ultimate source and target languages, and train two separate statistical translation models~\citep{koehn-etal-2003-statistical} to perform source-to-pivot and pivot-to-target translations. Since the successful application of encoder-decoder architectures in sequential tasks~\citep{NIPS2014_5346}, neural machine translation~\citep{bahdanau2014neural,gnmt} has made it feasible to jointly learn from parallel corpora in multiple language pairs, and perform translation to multiple languages by a single model. Existing studies have been proposed to explore different variants of encoder-decoder architectures by using separate encoders (decoders) for multiple source (target) languages~\citep{dong-etal-2015-multi,firat-etal-2016-multi,firat-etal-2016-zero,zoph-knight-2016-multi,platanios18emnlp} or sharing the weight of a single encoder (decoder) for all source (target) languages~\citep{ha2016toward,johnson2017google}. Recent advances of universal neural machine translation have also been applied to improve low-resource machine translation~\citep{neubig-hu-2018-rapid,gu2018universal,arivazhagan2019massively,aharoni-etal-2019-massively} and downstream NLP tasks~\citep{artetxe2019massively,schwenk-douze-2017-learning}. Despite the recent empirical success in the literature, theoretical understanding is only nascent. Our work takes a first step towards better understanding the limitation of existing approaches and proposes a sufficient generative assumption that guarantees the success of universal machine translation.

\paragraph{Invariant Representations}
The line of work on seeking a shared multilingual embedding space started from learning cross-lingual word embeddings from parallel corpora~\citep{gouws2015bilbowa,luong2015bilingual} or a bilingual dictionary~\citep{mikolov2013exploiting,faruqui-dyer-2014-improving,artetxe2017learning,conneau2018word}, and later extended to learning cross-lingual contextual representations~\citep{devlin-etal-2019-bert,lample2019cross,huang-etal-2019-unicoder,conneau2019unsupervised} from monolingual corpora. The idea of learning invariant representations is not unique in machine translation. In fact, similar ideas have already been used in other contexts, including domain adaptation~\citep{ganin2016domain,zhao2018adversarial,zhao2019learning,combes2020domain}, fair representations~\citep{zemel2013learning,zhang2018mitigating,zhao2019inherent,zhao2019conditional} and counterfactual reasoning in causal inference~\citep{johansson2016learning,shalit2017estimating}. Different from these existing work which mainly focuses on binary classification, our work provides the first impossibility theorem on learning language-invariant representations in terms of recovering a perfect translator under the setting of seq-to-seq learning.


\section{An Impossibility Theorem}
\label{sec:negative}
In this section, for the clarity of presentation, we first focus the deterministic setting where for each language pair $L\neq L'$, there exists a ground-truth translator $\truetranslator{L}{L'}:\Sigma_L^*\to\Sigma_{L'}^*$ that takes an input sentence $x$ from the source language $L$ and outputs the ground-truth translation $\truetranslator{L}{L'}(x)\in\Sigma_{L'}^*$. Later we shall extend the setup to allow a probabilistic extension as well. Before we proceed, we first describe some concepts that will be used in the discussion.

Given a feature map $g:\xxspace\to\zzspace$ that maps instances from the input space $\xxspace$ to feature space $\zzspace$, we define $g_\sharp\dist\defeq \dist\circ g^{-1}$ to be the induced (pushforward) distribution of $\dist$ under $g$, i.e., for any event $E'\subseteq\zzspace$, $\Pr_{g\sharp\dist}(E') \defeq \Pr_\dist(g^{-1}(E')) = \Pr_\dist(\{x\in\xxspace\mid g(x)\in E'\})$. For two distribution $\dist$ and $\dist'$ over the same sample space, we use the total variation distance to measure the discrepancy them: $\dtv(\dist, \dist')\defeq \sup_{E}|\Pr_\dist(E) - \Pr_{\dist'}(E)|$, where $E$ is taken over all the measurable events under the common sample space. We use $\ind(E)$ to denote the indicator function which takes value 1 iff the event $E$ is true otherwise 0.

In general, given two sentences $x$ and $x'$, we use $\ell(x, x')$ to denote the loss function used to measure their distance. For example, we could use a $0-1$ loss function $\ellzo(x, x') = 0$ iff $x = x'$ else 1. If both $x$ and $x'$ are embedded in the same Euclidean space, we could also use the squared loss $\ell_2(x, x')$ as a more refined measure. To measure the performance of a translator $f$ on a given language pair $L\to L'$ w.r.t.\ the ground-truth translator $\truetranslator{L}{L'}$, we define the error function of $f$ as 
\begin{align*}
    \err_\dist^{L\to L'}(f) \defeq&~ \Exp_\dist\left[\ellzo(f(X), \truetranslator{L}{L'}(X))\right],
\end{align*}
which is the translation error of $f$ as compared to the ground-truth translator $\truetranslator{L}{L'}$. For universal machine translation, the input string of the translator can be any sentence from any language. To this end, let $\Sigma^*$ be the union of all the sentences/strings from all the languages of interest: $\Sigma^*\defeq \bigcup_{L\in\langs}\Sigma^*_L$. Then a universal machine translator of target language $L\in\langs$ is a mapping $f_L:\Sigma^*\to\Sigma_L^*$. In words, $f_L$ takes as input a string (from one of the possible languages) and outputs the corresponding translation in target language $L$. It is not hard to see that for such task there exists a perfect translator $f_L^*$:
\begin{equation}
    f_L^*(x) = \sum_{L'\in\langs}\ind(x\in\Sigma_{L'}^*)\cdot\truetranslator{L'}{L}(x).
\label{equ:optimal}
\end{equation}
Note that $\{\Sigma^*_{L'}\mid L'\in\langs\}$ forms a partition of $\Sigma^*$, so exactly one of the indicator $\ind(x\in\Sigma_{L'}^*)$ in~\eqref{equ:optimal} will take value 1. 

Given a target language $L$, existing approaches for universal machine seek to find an intermediate space $\zzspace$, such that source sentences from different languages are aligned within $\zzspace$. In particular, for each source language $L'$, the goal is to find a feature mapping $g_{L'}:\Sigma^*_{L'}\to\zzspace$ so that the induced distributions of different languages are close in $\zzspace$. The next step is to construct a decoder $h:\zzspace\to\Sigma_L^*$ that maps feature representation in $\zzspace$ to sentence in the target language $L$.

One interesting question about the idea of learning language-invariant representations is that, whether such method will succeed even under the benign setting where there is a ground-truth universal translator and the learner has access to infinite amount of data with unbounded computational resources. That is, we are interested in understanding the information-theoretic limit of such methods for universal machine translation. 

In this section we first present an impossibility theorem in the restricted setting of translating from two source languages $L_0$ and $L_1$ to a target language $L$. Then we will use this lemma to prove a lower bound of the universal translation error in the general many-to-many setting. We will mainly discuss the implications and intuition of our theoretical results and use figures to help illustrate the high-level idea of the proof. 

\subsection{Two-to-One Translation}
Recall that for each translation task $L_i\to L$, we have a joint distribution $\dist_{L_i,L}$ (parallel corpora) over the aligned source-target sentences. For convenience of notation, we use $\dist_i$ to denote the marginal distribution $\dist_{L_i, L}(L_i)$, $\forall i\in[K]$ when the target language $L$ is clear from the context. Given a fixed constant $\epsilon > 0$, we first define the  \emph{$\epsilon$-universal language mapping}:
\begin{definition}[$\epsilon$-Universal Language Mapping]
\label{def:umap}
A map $g:\bigcup_{i\in[K]}~\Sigma_{L_i}^*\to\zzspace$ is called an $\epsilon$-universal language mapping if $\dtv(g_\sharp\dist_i, g_\sharp\dist_j) \leq \epsilon,~\forall i\neq j$.
\end{definition}
In particular, if $\epsilon = 0$, we call the corresponding feature mapping a universal language mapping. In other words, a universal language mapping perfectly aligns the feature representations of different languages in feature space $\zzspace$. The following lemma provides a useful tool to connect the 0-1 translation error and the TV distance between the corresponding distributions. 
\begin{restatable}{lemma}{keylemma}
\label{lemma:key}
Let $\Sigma\defeq\bigcup_{L\in\langs}\Sigma_L$ and $\dist_\Sigma$ be a language model over $\Sigma^*$. For any two string-to-string maps $f,f':\Sigma^*\to\Sigma^*$, let $f_\sharp\dist_\Sigma$ and $f'_\sharp\dist_\Sigma$ be the corresponding pushforward distributions. Then $\dtv(f_\sharp\dist_\Sigma, f'_\sharp\dist_\Sigma)\leq \Pr_{\dist_\Sigma}(f(X)\neq f'(X))$ where $X\sim\dist_\Sigma$. 
\end{restatable}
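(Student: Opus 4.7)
The plan is a direct coupling argument using the definition of total variation distance. Since both $f(X)$ and $f'(X)$ are deterministic functions of the same random variable $X \sim \dist_\Sigma$, they are automatically coupled on a common probability space, which is exactly what is needed to relate $\dtv$ to a pointwise disagreement probability. I would start by fixing an arbitrary measurable event $E \subseteq \Sigma^*$, bound $|\Pr_{f_\sharp\dist_\Sigma}(E) - \Pr_{f'_\sharp\dist_\Sigma}(E)|$ by $\Pr_{\dist_\Sigma}(f(X)\neq f'(X))$ uniformly in $E$, and then take the supremum over $E$ to conclude via the definition of $\dtv$.

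The key step is the identity
\begin{align*}
\Pr_{f_\sharp\dist_\Sigma}(E) - \Pr_{f'_\sharp\dist_\Sigma}(E) \;=\; \Exp_{\dist_\Sigma}\!\left[\ind(f(X)\in E) - \ind(f'(X)\in E)\right],
\end{align*}
which follows directly from the change-of-variables formula for pushforwards. The crucial observation is that on the event $\{f(X) = f'(X)\}$ the two indicators agree, so the integrand vanishes; hence only inputs in $\{f(X) \neq f'(X)\}$ contribute. Since each indicator difference has magnitude at most $1$, I would bound
\begin{align*}
\left|\Pr_{f_\sharp\dist_\Sigma}(E) - \Pr_{f'_\sharp\dist_\Sigma}(E)\right| \;\leq\; \Exp_{\dist_\Sigma}\!\left[\ind(f(X)\neq f'(X))\right] \;=\; \Pr_{\dist_\Sigma}(f(X)\neq f'(X)),
\end{align*}
and since the right-hand side does not depend on $E$, taking the supremum over measurable $E$ yields the claimed inequality.

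I do not anticipate any real obstacle: this is essentially a data-processing / coupling bound, and the only minor care is that $\Sigma^*$ is countable so measurability is automatic and all the expectations above are well-defined sums. The mild subtlety worth flagging in the write-up is that the bound is one-sided in the sense that small TV distance between pushforwards does \emph{not} imply agreement of $f$ and $f'$; this asymmetry is precisely what will make the lemma useful later, since it converts a translation error guarantee (an upper bound on $\Pr(f(X) \neq f'(X))$) into an invariance guarantee on induced distributions.
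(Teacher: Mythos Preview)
Your proof is correct. It is essentially the standard coupling bound for total variation: since $f(X)$ and $f'(X)$ share the common source $X$, they form a coupling of $f_\sharp\dist_\Sigma$ and $f'_\sharp\dist_\Sigma$, and the disagreement probability of any coupling upper-bounds $\dtv$.

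The paper argues differently. Rather than working with the supremum definition, it invokes the $\ell_1$ characterization $\dtv(\distp,\distq)=\tfrac{1}{2}\sum_{y\in\Sigma^*}|\distp(y)-\distq(y)|$ (using that $\Sigma^*$ is countable), expands each term as $|\Exp[\ind(f(X)=y)]-\Exp[\ind(f'(X)=y)]|$, pulls the absolute value inside via the triangle inequality, and then does a case split on the indicator difference to arrive at $\sum_y \Pr(f(X)=y,\,f(X)\neq f'(X))=\Pr(f(X)\neq f'(X))$. Your route is shorter and does not rely on countability of $\Sigma^*$, so it would go through verbatim for continuous output spaces; the paper's computation is more explicit but longer and tied to the discrete setting. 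Both approaches hinge on the same underlying observation---that the contribution to any event vanishes on $\{f(X)=f'(X)\}$---so the difference is one of packaging rather than substance.
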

\begin{proof}
Note that the sample space $\Sigma^*$ is countable. For any two distributions $\distp$ and $\distq$ over $\Sigma^*$, it is a well-known fact that $\dtv(\distp, \distq) = \frac{1}{2}\sum_{y\in\Sigma^*}|\distp(y) - \distq(y)|$. Using this fact, we have:
\begin{align*}
    \dtv(f_\sharp\dist, f'_\sharp\dist) &= \frac{1}{2}\sum_{y\in\Sigma^*} \left|f_\sharp\dist(y) - f'_\sharp\dist(y)\right| \\
    &= \frac{1}{2}\sum_{y\in\Sigma^*} \left|\Pr_\dist(f(X) = y) - \Pr_\dist(f'(X) = y)\right| \\
    &= \frac{1}{2}\sum_{y\in\Sigma^*} \left|\Exp_\dist[\ind(f(X) = y)] - \Exp_\dist[\ind(f'(X) = y)]\right| \\
    &\leq \frac{1}{2}\sum_{y\in\Sigma^*} \Exp_\dist\left[\left|\ind(f(X) = y) - \ind(f'(X) = y)\right|\right] \\
    &= \frac{1}{2}\sum_{y\in\Sigma^*} \Exp_\dist[\ind(f(X) = y, f'(X)\neq y) + \ind(f(X) \neq y, f'(X)= y)] \\
    &= \frac{1}{2}\sum_{y\in\Sigma^*} \Exp_\dist\left[\ind(f(X) = y, f'(X)\neq f(X))\right] + \Exp_\dist\left[\ind(f'(X) = y, f'(X)\neq f(X))\right] \\
    &= \sum_{y\in\Sigma^*}\Exp_\dist\left[\ind(f(X) = y, f'(X)\neq f(X))\right] \\ 
    &= \sum_{y\in\Sigma^*}\Pr_\dist\left(f(X) = y, f'(X)\neq f(X)\right)  \\
    &= \Pr_\dist(f(X)\neq f'(X)).
\end{align*}
The second equality holds by the definition of the pushforward distribution. The inequality on the fourth line holds due to the triangule inequality and the equality on the seventh line is due to the symmetry between $f(X)$ and $f'(X)$. The last equality holds by the total law of probability. 
\end{proof}
The next lemma follows from the data-processing inequality for total variation and it shows that if languages are close in a feature space, then any decoder cannot increase the corresponding discrepancy of these two languages in the output space.

\begin{restatable}{lemma}{monotonous}(Data-processing inequality)
Let $\dist$ and $\dist'$ be any distributions over $\zzspace$, then for any decoder $h:\zzspace\to\Sigma_L^*$, $\dtv(h_\sharp\dist, h_\sharp\dist')\leq \dtv(\dist, \dist')$.
\end{restatable}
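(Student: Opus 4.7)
My plan is to prove the data-processing inequality directly from the variational characterization of total variation distance, $\dtv(\mu,\nu)=\sup_{E}|\Pr_\mu(E)-\Pr_\nu(E)|$, where the supremum ranges over measurable events in the common sample space. The core observation is that any event $E'\subseteq\Sigma_L^*$ in the output space pulls back, under the decoder $h$, to an event $h^{-1}(E')\subseteq\zzspace$ in the original space, and so the class of events one can distinguish after applying $h$ is (essentially) a subfamily of the events one can distinguish before applying $h$.

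Concretely, I would proceed as follows. First, fix an arbitrary (measurable) event $E'\subseteq\Sigma_L^*$. By the definition of the pushforward distribution recalled earlier in the paper, $\Pr_{h_\sharp\dist}(E')=\Pr_\dist(h^{-1}(E'))$ and similarly $\Pr_{h_\sharp\dist'}(E')=\Pr_{\dist'}(h^{-1}(E'))$. Second, subtracting and taking absolute values gives
\[
\bigl|\Pr_{h_\sharp\dist}(E')-\Pr_{h_\sharp\dist'}(E')\bigr|=\bigl|\Pr_{\dist}(h^{-1}(E'))-\Pr_{\dist'}(h^{-1}(E'))\bigr|\leq \dtv(\dist,\dist'),
\]
since $h^{-1}(E')$ is itself a particular measurable event in $\zzspace$ and the right-hand side is the supremum over all such events. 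Third, taking the supremum over $E'\subseteq\Sigma_L^*$ on the left-hand side yields $\dtv(h_\sharp\dist,h_\sharp\dist')\leq \dtv(\dist,\dist')$, which is exactly the claim.

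There is essentially no obstacle: the argument is a two-line manipulation of the definition of pushforward combined with the variational formula for TV. The only subtlety to be careful about is measurability of $h^{-1}(E')$, but since both $\zzspace$ and $\Sigma_L^*$ can be taken to be countable/discrete in our setting (as $\Sigma_L^*$ is countable and the earlier proof of Lemma~\ref{lemma:key} already exploited this), every function $h$ is trivially measurable and the preimage of any event is an event. An alternative route would be to use the coupling characterization of TV — take an optimal coupling $(Z,Z')$ of $\dist$ and $\dist'$ with $\Pr(Z\neq Z')=\dtv(\dist,\dist')$ and observe that $(h(Z),h(Z'))$ is a valid coupling of $h_\sharp\dist$ and $h_\sharp\dist'$ with $\Pr(h(Z)\neq h(Z'))\leq\Pr(Z\neq Z')$ — but the preimage argument above is strictly shorter and stays closer in style to the preceding lemma's proof.
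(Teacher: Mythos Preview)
Your proof is correct and is the standard argument via the variational characterization of total variation. The paper does not actually supply a proof of this lemma---it simply cites it as the (well-known) data-processing inequality for total variation---so your preimage argument is exactly the kind of short justification one would expect here, and it aligns with the definitions the paper has already set up (the pushforward $h_\sharp\dist$ and the sup-over-events definition of $\dtv$).
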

As a direct corollary, this implies that any distributions induced by a decoder over $\epsilon$-universal language mapping must also be close in the output space:
\begin{corollary}
\label{coro:simple}
If $g:\Sigma^*\to\zzspace$ is an $\epsilon$-universal language mapping, then for any decoder $h:\zzspace\to\Sigma_L^*$, $\dtv((h\circ g)_\sharp\dist_0, (h\circ g)_\sharp\dist_1)\leq \epsilon$.
\end{corollary}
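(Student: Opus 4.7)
The proof is a two-line chain that just stitches together the preceding data-processing lemma with the definition of $\epsilon$-universal language mapping, so I do not expect any real obstacle here; the main content of the corollary is already contained in the lemma, and my job is just to identify the right two distributions to feed into it.

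The plan is as follows. First, I would use the associativity of pushforwards: for any distribution $\dist$ on $\bigcup_{i\in[K]}\Sigma_{L_i}^*$, applying $g$ and then $h$ produces the same distribution as applying $h\circ g$ directly, i.e.\ $(h\circ g)_\sharp\dist = h_\sharp(g_\sharp\dist)$. This is an immediate consequence of the definition $\mu_\sharp\nu \defeq \nu\circ\mu^{-1}$ together with $(h\circ g)^{-1} = g^{-1}\circ h^{-1}$. So it suffices to bound $\dtv\!\bigl(h_\sharp(g_\sharp\dist_0),\,h_\sharp(g_\sharp\dist_1)\bigr)$.

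Next, I would apply the data-processing inequality lemma (the preceding restatable lemma) with the two input distributions taken to be $g_\sharp\dist_0$ and $g_\sharp\dist_1$ on $\zzspace$, and the post-processing map taken to be the decoder $h:\zzspace\to\Sigma_L^*$. This immediately yields
\[
\dtv\!\bigl(h_\sharp(g_\sharp\dist_0),\,h_\sharp(g_\sharp\dist_1)\bigr) \;\leq\; \dtv(g_\sharp\dist_0,\,g_\sharp\dist_1).
\]
Finally, I would invoke Definition~\ref{def:umap} with $i=0,j=1$: since $g$ is an $\epsilon$-universal language mapping, $\dtv(g_\sharp\dist_0,\,g_\sharp\dist_1)\leq\epsilon$. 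Composing these two bounds and using the pushforward identity from the first step gives exactly $\dtv((h\circ g)_\sharp\dist_0,\,(h\circ g)_\sharp\dist_1)\leq\epsilon$, completing the proof. The only bookkeeping detail worth noting is that both $\dist_0$ and $\dist_1$ are supported on $\Sigma_{L_0}^*$ and $\Sigma_{L_1}^*$ respectively, which lie in the domain $\bigcup_{i\in[K]}\Sigma_{L_i}^*$ of $g$, so the compositions $h\circ g$ are well-defined on the relevant supports and no measurability issue arises.
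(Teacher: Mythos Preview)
Your proposal is correct and matches the paper's approach exactly: the paper states this as an immediate corollary of the data-processing inequality lemma combined with Definition~\ref{def:umap}, without writing out an explicit proof. Your two-step argument (associativity of pushforwards, then data-processing plus the definition) is precisely the intended justification.
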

\begin{figure}[tb]
    \centering
    \includegraphics[width=0.8\linewidth]{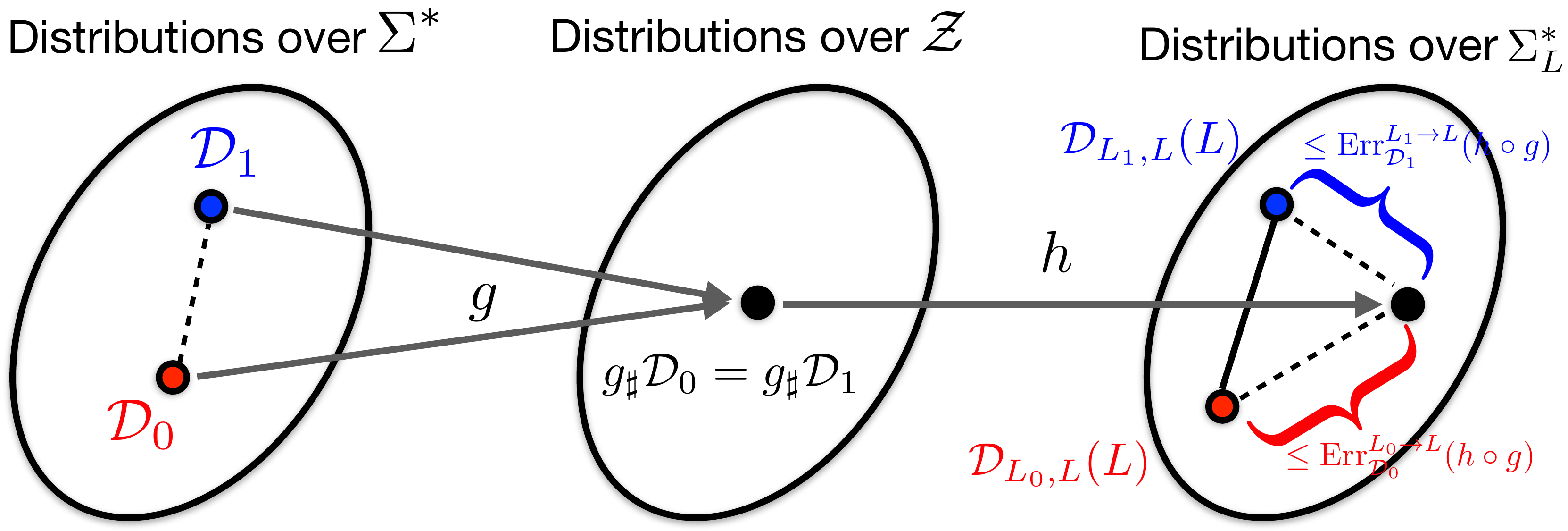}
    \caption{Proof by picture: Language-invariant representation $g$ induces the same feature distribution over $\zzspace$, which leads to the same output distribution over the target language $\Sigma_L^*$. However, the parallel corpora of the two translation tasks in general have different marginal distributions over the target language, hence a triangle inequality over the output distributions gives the desired lower bound.}
    \label{fig:proof1}
\end{figure}
With the above tools, we can state the following theorem that characterizes the translation error in a two-to-one setting:
\begin{restatable}{theorem}{basic}(Lower bound, Two-to-One)
\label{thm:lowerbound}
Consider a restricted setting of universal machine translation task with two source languages where $\Sigma^* = \Sigma_{L_0}^*\bigcup \Sigma_{L_1}^*$ and the target language is $L$. Let $g:\Sigma^*\to\zzspace$ be an $\epsilon$-universal language mapping, then for any decoder $h:\zzspace\to\Sigma_L^*$, we have
\begin{align}
    \err_{\dist_{0}}^{L_0\to L}(h\circ g) + \err_{\dist_{1}}^{L_1\to L}(h\circ g) \geq \dtv(\dist_{L_0, L}(L), \dist_{L_1, L}(L)) - \epsilon.
\label{equ:lowerbound}
\end{align}
\end{restatable}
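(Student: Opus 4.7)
The plan is to chain three total variation bounds together via the triangle inequality, as suggested by the picture-proof. The three distributions I will relate on $\Sigma_L^*$ are $\dist_{L_0,L}(L)$, $\dist_{L_1,L}(L)$, and the two pushforwards $(h\circ g)_\sharp \dist_0$ and $(h\circ g)_\sharp \dist_1$ produced by the candidate translator. Since the ground-truth translators are deterministic, I first observe the identity $\dist_{L_i, L}(L) = (\truetranslator{L_i}{L})_\sharp \dist_i$ for $i\in\{0,1\}$: pushing the source marginal through the true translator reproduces the target marginal of the joint parallel distribution.

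Next, I would invoke the three tools already proved. From Corollary~\ref{coro:simple} applied to the $\epsilon$-universal mapping $g$ and the decoder $h$, I get
\[
    \dtv\bigl((h\circ g)_\sharp \dist_0,\,(h\circ g)_\sharp \dist_1\bigr) \leq \epsilon.
\]
From Lemma~\ref{lemma:key} applied to the pair of maps $h\circ g$ and $\truetranslator{L_i}{L}$ (both viewed as string-to-string maps on $\Sigma^*$) with the input distribution $\dist_i$, I get
\[
    \dtv\bigl((h\circ g)_\sharp \dist_i,\,(\truetranslator{L_i}{L})_\sharp \dist_i\bigr) \leq \Pr_{\dist_i}\bigl((h\circ g)(X)\neq \truetranslator{L_i}{L}(X)\bigr) = \err_{\dist_i}^{L_i\to L}(h\circ g),
\]
for $i=0,1$, where the last equality is just the definition of the $0$-$1$ translation error.

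The final step is a triangle inequality on the metric $\dtv$ applied to the four distributions above, routing from $\dist_{L_0,L}(L)=(\truetranslator{L_0}{L})_\sharp\dist_0$ through $(h\circ g)_\sharp \dist_0$, then through $(h\circ g)_\sharp \dist_1$, and finally to $\dist_{L_1,L}(L)=(\truetranslator{L_1}{L})_\sharp\dist_1$. Substituting the three bounds yields
\[
    \dtv\bigl(\dist_{L_0,L}(L),\,\dist_{L_1,L}(L)\bigr) \leq \err_{\dist_0}^{L_0\to L}(h\circ g) + \epsilon + \err_{\dist_1}^{L_1\to L}(h\circ g),
\]
and rearranging gives \eqref{equ:lowerbound}. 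Since all three pieces are already established and the target marginals identity is immediate from determinism of $\truetranslator{L_i}{L}$, there is no real obstacle here; the only subtlety to flag in the write-up is making sure Lemma~\ref{lemma:key} is applied with domain $\Sigma^*$ and input distribution $\dist_i$ supported on $\Sigma_{L_i}^*\subseteq\Sigma^*$, so that both $h\circ g$ and $\truetranslator{L_i}{L}$ can be treated as maps on the same common sample space.
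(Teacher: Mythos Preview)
Your proposal is correct and essentially identical to the paper's own proof: both apply the triangle inequality for $\dtv$ to route from $\dist_{L_0,L}(L)=(\truetranslator{L_0}{L})_\sharp\dist_0$ through $(h\circ g)_\sharp\dist_0$ and $(h\circ g)_\sharp\dist_1$ to $\dist_{L_1,L}(L)=(\truetranslator{L_1}{L})_\sharp\dist_1$, bound the middle hop by $\epsilon$ via Corollary~\ref{coro:simple}, and bound the two outer hops by the translation errors via Lemma~\ref{lemma:key}. The only cosmetic difference is the order in which the pieces are stated.
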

\begin{proof}[Proof of Theorem~\ref{thm:lowerbound}]
First, realize that $\dtv(\cdot, \cdot)$ is a distance metric, the following chain of triangle inequalities hold:
\begin{align*}
    \dtv(\dist_{L_0, L}(L), &~\dist_{L_1, L}(L))\leq \dtv(\dist_{L_0, L}(L), (h\circ g)_\sharp\dist_0) \\
    &+ \dtv((h\circ g)_\sharp\dist_1, \dist_{L_1, L}(L)) \\
    &+ \dtv((h\circ g)_\sharp\dist_0, (h\circ g)_\sharp\dist_1).
\end{align*}
Now by the assumption that $g$ is an $\epsilon$-universal language mapping and Corollary~\ref{coro:simple}, the third term on the RHS of the above inequality, $\dtv((h\circ g)_\sharp\dist_0, (h\circ g)_\sharp\dist_1)$, is upper bounded by $\epsilon$. Furthermore, note that since the following equality holds:
\begin{equation*}
    \dist_{L_i, L}(L) = {\truetranslator{L_i}{L}}_\sharp\dist_i,\quad\forall i\in\{0, 1\}, 
\end{equation*}
we can further simplify the above inequality as
\begin{align*}
    \dtv(\dist_{L_0, L}(L), \dist_{L_1, L}(L)) \leq \dtv({\truetranslator{L_0}{L}}_\sharp\dist_0, (h\circ g)_\sharp\dist_0) + \dtv((h\circ g)_\sharp\dist_1, {\truetranslator{L_1}{L}}_\sharp\dist_1) + \epsilon. 
\end{align*}
Now invoke Lemma~\ref{lemma:key} for $i \in\{0, 1\}$ to upper bound the first two terms on the RHS, yielding:
\begin{equation*}
    \dtv({\truetranslator{L_i}{L}}_\sharp\dist_i, (h\circ g)_\sharp\dist_i) \leq \Pr_{\dist_i}\left((h\circ g)(X)\neq \truetranslator{L_i}{L}(X)\right) = \err_{\dist_{i}}^{L_i\to L}(h\circ g).
\end{equation*}
A simple rearranging then completes the proof.
\end{proof}

\paragraph{Remark}
Recall that under our setting, there exists a perfect translator $f_L^*:\Sigma^*\to\Sigma_L^*$ in~\eqref{equ:optimal} that achieves zero translation error on both translation tasks. Nevertheless, the lower bound in Theorem~\ref{thm:lowerbound} shows that one cannot hope to simultaneously minimize the joint translation error on both tasks through universal language mapping. Second, the lower bound is algorithm-independent and it holds even with unbounded computation and data. Third, the lower bound also holds even if all the data are perfect, in the sense that all the data are sampled from the perfect translator on each task. Hence, the above result could be interpreted as a kind of \emph{uncertainty principle} in the context of universal machine translation, which says that any decoder based on language-invariant representations has to achieve a large translation error on at least one pair of translation task. We provide a proof-by-picture in Fig.~\ref{fig:proof1} to illustrate the main idea underlying the proof of Theorem~\ref{thm:lowerbound} in the special case where $\epsilon = 0$.

The lower bound is large whenever the distribution over target sentences differ between these two translation tasks. This often happens in practical scenarios where the parallel corpus of high-resource language pair contains texts over a diverse domain whereas as a comparison, parallel corpus of low-resource language pair only contains target translations from a specific domain, e.g., sports, news, product reviews, etc. Such negative impact on translation quality due to domain mismatch between source and target sentences has also recently been observed and confirmed in practical universal machine translation systems, see~\citet{shen2019source} and~\citet{pires2019multilingual} for more empirical corroborations. 

\subsection{Many-to-Many Translation}
Theorem~\ref{thm:lowerbound} presents a negative result in the setting where we have two source languages and one target language for translation. Nevertheless universal machine translation systems often involve multiple input and output languages simultaneously~\citep{gnmt,fair2019umt,artetxe2019massively,johnson2017google}. In this section we shall extend the previous lower bound in the simple two-to-one setting to the more general translation task of many-to-many setting. 

To enable such extension, i.e., to be able to make use of multilingual data within a single system, we need to modify the input sentence to introduce the language token $\langle L\rangle$ at the beginning of the input sentence to indicate the target language $L$ the model should translate to. This simple modification has already been proposed and used in practical MT systems~\citep[Section 3]{johnson2017google}. To give an example, consider the following English sentence to be translated to French, 
\begin{quote}
\emph{
$\langle \text{English}\rangle$ Hello, how are you?}    
\end{quote}
It will be modified to:
\begin{quote}
\emph{
$\langle \text{French}\rangle\langle \text{English}\rangle$ Hello, how are you?}    
\end{quote}
Note that the first token is used to indicate the target language to translate to while the second one is used to indicate the source language to avoid the ambiguity due to the potential overlapping alphabets between different languages.

Recall in Definition~\ref{def:umap} we define a language map $g$ to be $\epsilon$-universal iff $\dtv(g_\sharp\dist_i, g_\sharp\dist_j)\leq \epsilon$, $\forall i, j$. This definition is too stringent in the many-to-many translation setting since this will imply that the feature representations lose the information about which target language to translate to. In what follows we shall first provide a relaxed definition of $\epsilon$-universal language mapping in the many-to-many setting and then show that even under this relaxed definition, learning universal machine translator via language-invariant representations is impossible in the worst case. 
\begin{definition}[$\epsilon$-Universal Language Mapping, Many-to-Many]
\label{def:umapmm}
Let $\dist_{L_i, L_k}$, $i, k\in[K]$ be the joint distribution of sentences (parallel corpus) in translating from $L_i$ to $L_k$. 
A map $g:\bigcup_{i\in[K]}~\Sigma_{L_i}^*\to\zzspace$ is called an $\epsilon$-universal language mapping if there exists a partition of $\zzspace = \sqcup_{k\in [K]}\zzspace_k$ such that $\forall k\in[K]$ and $\forall i\neq j$, $g_\sharp\dist_{L_i, L_k}(L_i)$ and $g_\sharp\dist_{L_j, L_k}(L_j)$ are supported on $\zzspace_k$ and $\dtv(g_\sharp\dist_{L_i, L_k}(L_i), g_\sharp\dist_{L_j, L_k}(L_j)) \leq \epsilon$.
\end{definition}
First of all, it is clear that when there is only one target language, then Definition~\ref{def:umapmm} reduces to Definition~\ref{def:umap}. Next, the partition of the feature space $\zzspace = \sqcup_{k\in [K]}\zzspace_k$ essentially serves as a way to determine the target language $L$ the model should translate to. Note that it is important here to enforce the partitioning condition of the feature space $\zzspace$, otherwise there will be ambiguity in determining the target language to translate to. For example, if the following two input sentences 
\begin{quote}
\emph{$\langle \text{French}\rangle\langle \text{English}\rangle$ Hello, how are you?}    \\
\emph{$\langle \text{Chinese}\rangle\langle \text{English}\rangle$ Hello, how are you?}
\end{quote}
are mapped to the same feature representation $z\in\zzspace$, then it is not clear whether the decoder $h$ should translate $z$ to French or Chinese. 

With the above extensions, now we are ready to present the following theorem which gives a lower bound for both the maximum error as well as the average error in the many-to-many universal translation setting.
\begin{restatable}{theorem}{manyone}(Lower bound, Many-to-Many)
\label{thm:manyone}
Consider a universal machine translation task where $\Sigma^* = \bigcup_{i\in[K]} \Sigma_{L_i}^*$. Let $\dist_{L_i, L_k}$, $i, k\in[K]$ be the joint distribution of sentences (parallel corpus) in translating from $L_i$ to $L_k$. If $g:\Sigma^*\to\zzspace$ be an $\epsilon$-universal language mapping, then for any decoder $h:\zzspace\to\Sigma^*$, we have
\begin{align*}
    \max_{i,k\in[K]}~\err_{\dist_{L_i, L_k}}^{L_i\to L_k}(h\circ g) &\geq \frac{1}{2}\max_{k\in[K]}\max_{i\neq j}~\dtv(\dist_{L_i, L_k}(L_k), \dist_{L_j, L_k}(L_k)) - \frac{\epsilon}{2}, \\
    \frac{1}{K^2}\sum_{i,k\in[K]}~\err_{\dist_{L_i, L_k}}^{L_i\to L_k}(h\circ g) &\geq \frac{1}{K^2(K-1)}\sum_{k\in[K]}\sum_{i<j} \dtv(\dist_{L_i, L_k}(L_k), \dist_{L_j, L_k}(L_k)) - \frac{\epsilon}{2}.
\end{align*}
\end{restatable}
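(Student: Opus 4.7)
The plan is to reduce the many-to-many lower bound to a per-target-language, per-source-pair application of the two-to-one argument used in Theorem~\ref{thm:lowerbound}, and then aggregate either by a max or by a counting argument. Fix any target index $k\in[K]$ and any two distinct source indices $i\neq j$. Because $g$ is an $\epsilon$-universal language mapping in the sense of Definition~\ref{def:umapmm}, both pushforwards $g_\sharp \dist_{L_i, L_k}(L_i)$ and $g_\sharp \dist_{L_j, L_k}(L_j)$ are supported on the same block $\zzspace_k$ of the partition, and are $\epsilon$-close in total variation. Consequently, by the data-processing inequality for any decoder $h$, the pair $(h\circ g)_\sharp \dist_{L_i,L_k}(L_i)$ and $(h\circ g)_\sharp \dist_{L_j,L_k}(L_j)$ remain within $\epsilon$ in TV; note that by the partition property $h$ restricted to $\zzspace_k$ can be viewed as a decoder into $\Sigma_{L_k}^*$, so this exactly fits the hypothesis of the two-to-one setup.

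Next I would run the triangle-inequality argument used in the proof of Theorem~\ref{thm:lowerbound} verbatim: apply $\dtv(\cdot,\cdot)$ triangle inequality with the three intermediate measures $(h\circ g)_\sharp\dist_{L_i,L_k}(L_i)$, $(h\circ g)_\sharp\dist_{L_j,L_k}(L_j)$, $\dist_{L_j,L_k}(L_k)$, and use the identity $\dist_{L_i,L_k}(L_k) = {\truetranslator{L_i}{L_k}}_\sharp \dist_{L_i,L_k}(L_i)$ together with Lemma~\ref{lemma:key} to upper-bound two of the three resulting TV distances by the corresponding errors. This yields the per-triple inequality
\begin{equation*}
\err_{\dist_{L_i,L_k}}^{L_i\to L_k}(h\circ g) + \err_{\dist_{L_j,L_k}}^{L_j\to L_k}(h\circ g) \;\geq\; \dtv(\dist_{L_i,L_k}(L_k), \dist_{L_j,L_k}(L_k)) - \epsilon,
\end{equation*}
which is the many-to-many analogue of the bound in Theorem~\ref{thm:lowerbound}.

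From this per-triple inequality the two aggregated bounds follow by simple arithmetic. For the max bound, the left-hand side is at most $2\max_{i,k}\err_{\dist_{L_i,L_k}}^{L_i\to L_k}(h\circ g)$; dividing by $2$ and then taking the maximum over $k$ and over pairs $i\neq j$ on the right yields the first claim. For the average bound, I would sum the per-triple inequality over all $k$ and all unordered pairs $i<j$. The key counting identity is that, for each fixed $k$, each source index $i$ appears in exactly $K-1$ pairs, so $\sum_{i<j}\bigl(\err_{i,k} + \err_{j,k}\bigr) = (K-1)\sum_{i}\err_{i,k}$. The total subtractive term is $\binom{K}{2}K\epsilon = \tfrac{K^2(K-1)}{2}\epsilon$. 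Dividing both sides by $K^2(K-1)$ then gives the stated $-\epsilon/2$ slack and the $\frac{1}{K^2(K-1)}$ normalization on the TV sum.

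The only nontrivial conceptual step is the first one: verifying that, for each fixed target $k$, the hypotheses of the two-to-one argument actually hold under Definition~\ref{def:umapmm}. The partition condition is crucial here, since without it the decoder would not be forced to map inputs destined for $L_k$ into $\Sigma_{L_k}^*$, and the triangle inequality argument would lose the anchor distribution $\dist_{L_i,L_k}(L_k)$ on $\Sigma_{L_k}^*$. Once that reduction is in place, everything reduces to bookkeeping: the max bound is immediate, and the average bound is a counting computation that I expect to work out cleanly with the $(K-1)$-fold double counting noted above.
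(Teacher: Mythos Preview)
Your proposal is correct and takes essentially the same approach as the paper's proof: obtain the per-triple inequality by invoking the two-to-one argument of Theorem~\ref{thm:lowerbound} for each fixed target $k$ and source pair $i\neq j$, then aggregate by a max for the first claim and by the $(K-1)$-fold double-counting identity for the second. One small aside: your remark that the partition forces $h\vert_{\zzspace_k}$ to land in $\Sigma_{L_k}^*$ is not quite right and is not needed---Lemma~\ref{lemma:key} and the TV triangle inequality work regardless of where $(h\circ g)_\sharp$ is supported in $\Sigma^*$, so the per-triple inequality holds without that claim.
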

\begin{proof}[Proof of Theorem~\ref{thm:manyone}]
First let us fix a target language $L_k$. For each pair of source languages $L_i, L_j,~i\neq j$ translating to $L_k$, applying Theorem~\ref{thm:lowerbound} gives us:
\begin{equation}
    \err_{\dist_{L_i, L_k}}^{L_i\to L_k}(h\circ g) + \err_{\dist_{L_j, L_k}}^{L_j\to L_k}(h\circ g) \geq \dtv(\dist_{L_i, L_k}(L_k), \dist_{L_j, L_k}(L_k)) - \epsilon.
\label{equ:triangle}
\end{equation}
Now consider the pair of source languages $(L_{i^*}, L_{j^*})$ with the maximum $\dtv(\dist_{L_i, L_k}(L_k), \dist_{L_j, L_k}(L_k))$:
\begin{align}
    2\max_{i\in[K]}~\err_{\dist_{L_i, L_k}}^{L_i\to L_k}(h\circ g) &\geq \err_{\dist_{L_{i^*}, L_k}}^{L_{i^*}\to L_k}(h\circ g) + \err_{\dist_{L_{j^*}, L_k}}^{L_{j^*}\to L_k}(h\circ g) \nonumber\\
    &\geq \max_{i\neq j}~\dtv(\dist_{L_i, L_k}(L_k), \dist_{L_j, L_k}(L_k)) - \epsilon.
\label{equ:manymany}
\end{align}
Since the above lower bound~\eqref{equ:manymany} holds for any target language $L_k$, taking a maximum over the target languages yields:
\begin{equation*}
    2\max_{i,k\in[K]}~\err_{\dist_{L_i, L_k}}^{L_i\to L_k}(h\circ g) \geq \max_{k\in[K]}\max_{i\neq j}~\dtv(\dist_{L_i, L_k}(L_k), \dist_{L_j, L_k}(L_k)) - \epsilon,
\end{equation*}
which completes the first part of the proof. For the second part, again, for a fixed target language $L_k$, to lower bound the average error, we apply the triangle inequality in~\eqref{equ:triangle} iteratively for all pairs $i < j$, yielding:
\begin{equation*}
    (K-1)\sum_{i\in[K]}~\err_{\dist_{L_i, L_k}}^{L_i\to L_k}(h\circ g) \geq \sum_{i<j} \dtv(\dist_{L_i, L_k}(L_k), \dist_{L_j, L_k}(L_k)) - \frac{K(K-1)}{2}\epsilon.
\end{equation*}
Dividing both sides by $K(K-1)$ gives the average translation error to $L_k$. Now summing over all the possible target language $L_k$ yields:
\begin{equation*}
    \frac{1}{K^2}\sum_{i,k\in[K]}~\err_{\dist_{L_i, L_k}}^{L_i\to L_k}(h\circ g) \geq \frac{1}{K^2(K-1)}\sum_{k\in[K]}\sum_{i<j} \dtv(\dist_{L_i, L_k}(L_k), \dist_{L_j, L_k}(L_k)) - \frac{\epsilon}{2}.\qedhere
\end{equation*}
\end{proof}
It is clear from the proof above that both lower bounds in Theorem~\ref{thm:manyone} include the many-to-one setting as a special case. The proof of Theorem~\ref{thm:manyone} essentially applies the lower bound in Theorem~\ref{thm:lowerbound} iteratively. Again, the underlying reason for such negative result to hold in the worst case is due to the mismatch of distributions of the target language in different pairs of translation tasks. It should also be noted that the results in Theorem~\ref{thm:manyone} hold even if language-dependent encoders are used, as long as they induce invariant feature representations for the source languages. 

\paragraph{How to Bypass this Limitation?}
There are various ways to get around the limitations pointed out by the theorems in this section. 

One way is to allow the decoder $h$ to have access to the input sentences (besides the language-invariant representations) during the decoding process -- e.g. via an attention mechanism on the input level. Technically, such information flow from input sentences during decoding would break the Markov structure of ``input-representation-output'' in Fig.~\ref{fig:proof1}, which is an essential ingredient in the proof of Theorem~\ref{thm:lowerbound} and Theorem~\ref{thm:manyone}. Intuitively, in this case both language-invariant (hence language-independent) and language-dependent information would be used. 

Another way would be to assume extra structure on the distributions $\mathcal{D}_{L_i, L_j}$, i.e., by assuming some natural language generation process for the parallel corpora that are used for training (Cf.\ Section~\ref{sec:positive}). Since languages share a lot of semantic and syntactic characteristics, this would make a lot of sense --- and intuitively, this is what universal translation approaches are banking on. In the next section we will do exactly this --- we will show that under a suitable generative model, not only will there be a language-invariant representation, but it will be learnable using corpora from a very small (linear) number of pairs of language.

\section{Sample Complexity under a Generative Model}
\label{sec:positive}

The results from the prior sections showed that absent additional assumptions on the distributions of the sentences in the corpus, there is a fundamental limitation on learning language-invariant representations for universal machine translation. Note that our negative result also holds in the setting where there exists a ground-truth universal machine translator -- it's just that learning language-invariant representations cannot lead to the recovery of this ground-truth translator.  

In this section we show that with
additional natural structure on the distribution of the corpora we can resolve this issue. The structure is a natural underlying generative model from which sentences from different languages are generated, which ``models'' a common encoder-decoder structure that has been frequently used in practice~\citep{cho2014learning,NIPS2014_5346,ha2016toward}. Under this setting, we show that it is not only possible to learn the optimal translator, but it is possible to do so only seeing documents from only a small subset of all the possible language pairs. 

Moreover, we will formalize a notion of ``\emph{sample complexity}'' in terms of number of pairs of languages for which parallel corpora are necessary, and how it depends on the structure of the connection graph between language pairs. 

We first describe our generative model for languages and briefly talk about why such generative model could help to overcome the negative result in Theorem~\ref{thm:manyone}.

\subsection{Language Generation Process and Setup}
\paragraph{Language Generative Process}
\label{sec:deterministic}
The language generation process is illustrated in Fig.~\ref{fig:encdec}. Formally, we assume the existence of a shared ``semantic space'' $\zzspace$. Furthermore, for every language $L \in \langs$, we have a ``ground truth'' pair of encoder and decoder $(\enc{L}, \dec{L})$, where $\enc{L}: \Real^d \to \Real^d$, $\enc{L} \in \mathcal{F}$ is \emph{bijective} and $\dec{L} = \enc{L}^{-1}$. We assume that $\Fcal$ has a group structure under function composition: namely, for $\forall f_1, f_2\in\Fcal$, we have that $f_1^{-1}, f_2^{-1}\in\Fcal$ and $f_1\circ f_2^{-1}, f_2\circ f_1^{-1}\in \Fcal$ (e.g., a typical example of such group is the general linear group $\Fcal = GL_d(\RR)$).

\begin{figure}[tb]
    \centering
    \includegraphics[width=0.7\linewidth]{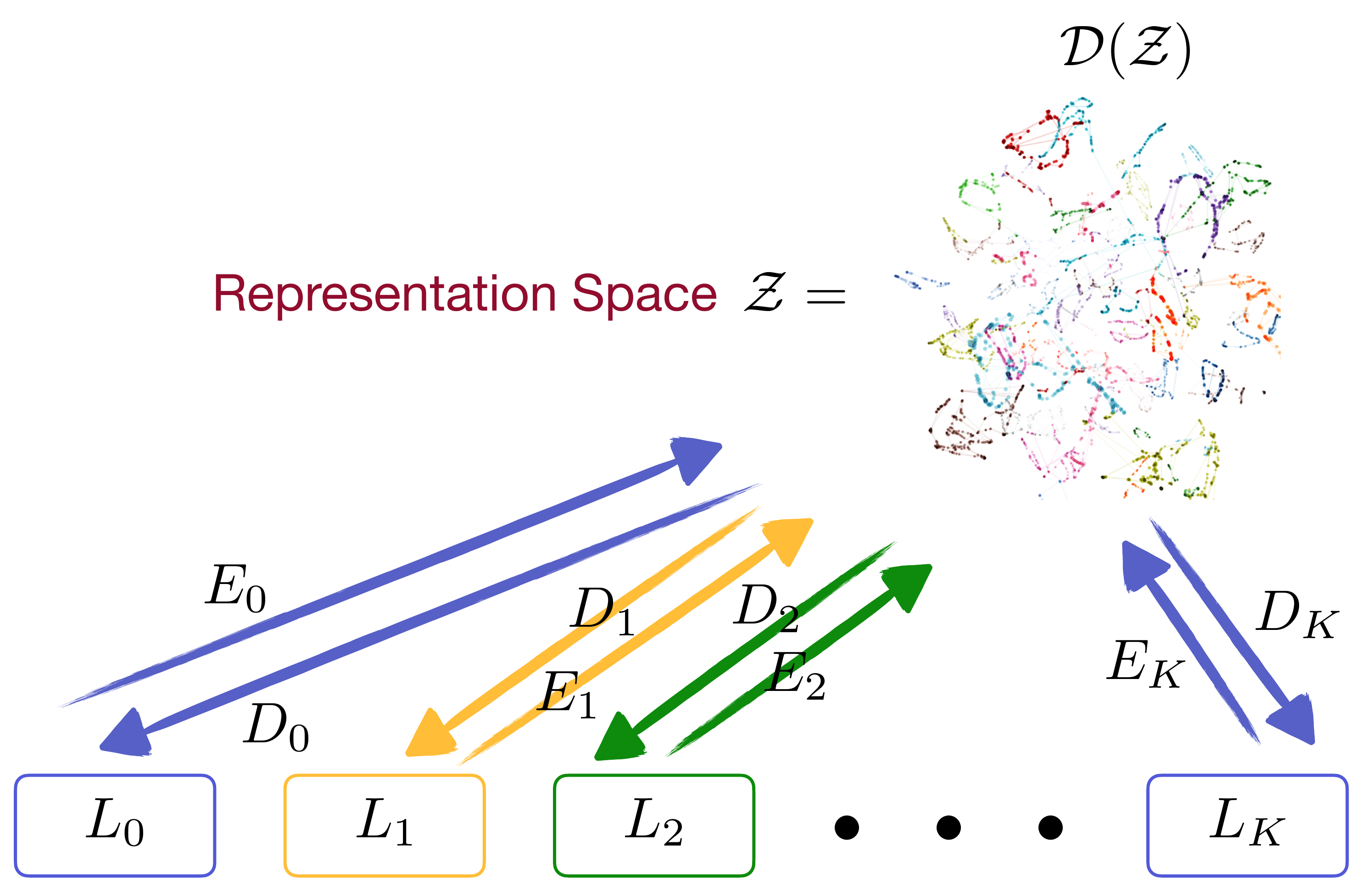}
    \caption{An encoder-decoder generative model of translation pairs. There is a global distribution $\dist$ over representation space $\zzspace$, from which sentences of language $L_i$ are generated via decoder $D_i$. Similarly, sentences could also be encoded via $E_i$ to $\zzspace$.}
    \label{fig:encdec}
\end{figure}
To generate a pair of aligned sentences for two languages $L,L'$, we first sample a $z \sim \dist$, and subsequently generate 
\begin{equation} 
x = \dec{L}(z), \quad x' = \dec{L'}(z),
\end{equation}
where the $x$ is a vector encoding of the appropriate sentence in $L$ (e.g., a typical encoding is a frequency count of the words in the sentence, or a sentence embedding using various neural network models~\citep{zhao2015self,kiros2015skip,wang2017sentence}). Similarly, $x'$ is the corresponding sentence in $L'$. Reciprocally, given a sentence $x$ from language $L$, the encoder $\enc{L}$ maps the sentence $x$ into its corresponding latent vector in $\zzspace$: $z = \enc{L}(x)$. 

We note that we assume this deterministic map between $z$ and $x$ for simplicity of exposition---in Section~\ref{s:randomized} we will extend the results to the setting where $x$ has a conditional distribution given $z$ of a parametric form.

We will assume the existence of a graph $H$ capturing the pairs of languages for which we have aligned corpora -- we can think of these as the ``high-resource'' pairs of languages. For each edge in this graph, we will have a corpus $S = \{(x_i, x'_i)\}_{i=1}^n$ of aligned sentences.\footnote{In general each edge can have different number of aligned sentences. We use the same number of aligned sentences $n$ just for the ease of presentation.}
The goal will be to learn encoder/decoders that perform well on the potentially unseen pairs of languages. To this end, we will be providing a sample complexity analysis for the number of paired sentences for each pair of languages with an edge in the graph, so we will need a measure of the complexity of $\Fcal$. 
We will use the covering number, though our proofs are flexible, and similar results would hold for Rademacher complexity, VC dimension, or any of the usual complexity measures. 

\begin{definition}[Covering number]
For any $\epsilon > 0$, the \emph{covering number} $\covering(\Fcal, \epsilon)$ of the function class $\Fcal$ under the $\ell_\infty$ norm is the minimum number $k\in\Nat$ such that $\Fcal$ could be covered with $k$ ($\ell_\infty$) balls of radius $\epsilon$, i.e., there exists $\{f_1, \ldots, f_k\}\subseteq\Fcal$ such that, for all $f\in\Fcal$, there exists $i\in[k]$ with $\|f - f_i\|_\infty = \max_{x\in\RR^d}\|f(x) - f_i(x)\|_2\leq\epsilon$.
\end{definition}

Finally, we will assume that the functions in $\Fcal$ are bounded and Lipschitz: 

\begin{assumption}[Smoothness and Boundedness]
$\Fcal$ is bounded under the $\|\cdot\|_\infty$ norm, i.e., there exists $M > 0$, such that $\forall f\in\Fcal$, $\|f\|_\infty\leq M$. Furthermore, there exists $0\leq \rho < \infty$, such that for $\forall x, x'\in\RR^d$, $\forall f\in\Fcal$, $\|f(x) - f(x')\|_2\leq \rho\cdot \|x - x'\|_2$.
\end{assumption}

\paragraph{Training Procedure} Turning to the training procedure, we will be learning encoders $E_L \in \Fcal$ for each language $L$. The decoder for that language will be $E^{-1}_L$, which is well defined since $\Fcal$ has a group structure. Since we are working with a vector space, rather than using the (crude) 0-1 distance, we will work with a more refined loss metric for a translation task $L\to L'$:
\begin{equation} 
\risk(E_L, E_{L'})\defeq \|E^{-1}_{L'}\circ E_L - \enc{L'}^{-1}\circ \enc{L}\|_{\ell_2({\dec{L}}_\sharp\dist)}^2.
\end{equation}
Note that the $\ell_2$ loss is taken over the distribution of the input samples ${\dec{L}}_\sharp\dist = {\enc{L}^{-1}}_\sharp\dist$, which is the natural one under our generative process. Again, the above error measures the discrepancy between the predicted translation w.r.t.\ the one give by the ground-truth translator, i.e., the composition of encoder $\enc{L}$ and decoder $\dec{L'}$. Straightforwardly, the empirical error over a corpus $S = \{(x_i, x'_i)\}_{i=1}^n$ of aligned sentences for a pair of languages $(L,L')$ is defined by
\begin{equation}
    \emprisk_S(E_L, E_{L'})\defeq \frac{1}{n}\sum_{i\in[n]}\|E^{-1}_{L'}\circ E_L(x_i) - x_i'\|_2^2,
\end{equation}
where $S$ is generated by the generation process. Following the paradigm of empirical risk minimization, the loss to train the encoders will be the obvious one: 
\begin{equation} 
\min_{\{E_L, L \in \langs\}}\quad\sum_{(L,L') \in H} \emprisk_S(E_L, E_{L'}).
\end{equation} 
\paragraph{Remarks} 
Before we proceed, one natural question to ask here is that, how does this generative model assumption circumvent the lower bound in Theorem~\ref{thm:manyone}? To answer this question, note the following easy proposition: 
\begin{proposition}
Under the encoder-decoder generative assumption, $\forall i, j\in[K]$, $\dtv(\dist_{L_i, L}(L), \dist_{L_j, L}(L)) = 0$.
\label{prop:zero}
\end{proposition}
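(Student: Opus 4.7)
The plan is to unfold the definition of the encoder-decoder generative process and observe that the marginal over the target language is determined purely by the decoder $\dec{L}$ and the global latent distribution $\dist$, with no dependence on the source language. Concretely, for any source language $L_i$, the joint distribution $\dist_{L_i, L}$ is defined by sampling $z \sim \dist$ and setting $(X, X') = (\dec{L_i}(z), \dec{L}(z))$. Marginalizing over the first coordinate, the distribution of $X'$ is simply the pushforward ${\dec{L}}_\sharp \dist$.

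Hence I would write, for any $i \in [K]$,
\begin{equation*}
\dist_{L_i, L}(L) \;=\; {\dec{L}}_\sharp\dist,
\end{equation*}
and the right-hand side does not depend on $i$. Doing the same for $j$ yields $\dist_{L_j, L}(L) = {\dec{L}}_\sharp\dist$. Since both marginals coincide as measures on $\Sigma_L^*$ (or on $\RR^d$ under the vector encoding), we immediately have $\dtv(\dist_{L_i, L}(L), \dist_{L_j, L}(L)) = 0$.

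There is no real obstacle here: the proposition is essentially a tautology once the generative assumption is spelled out, and its purpose is to explain, in one line, why the lower bound of Theorem~\ref{thm:manyone} becomes vacuous in this setting (the RHS involving $\dtv(\dist_{L_i, L}(L), \dist_{L_j, L}(L))$ collapses to zero, leaving only the $-\epsilon/2$ slack). The only thing worth being careful about is keeping straight that $\dist_{L_i, L}(L)$ denotes the marginal over the target language $L$ within the joint parallel-corpus distribution, so that the pushforward identity above is indeed the correct reading of the generative model depicted in Fig.~\ref{fig:encdec}.
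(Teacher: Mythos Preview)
Your proposal is correct and matches the paper's own justification essentially verbatim: the paper simply notes that $\dist_{L_i, L}(L) = {\dec{L}}_\sharp\dist$ for every source language $L_i$, so the marginals coincide and the total variation distance is zero.
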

Proposition~\ref{prop:zero} holds because the marginal distribution of the target language $L$ under any pair of translation task equals the pushforward of $\dist(Z)$ under $\dec{L}$: $\forall i\in[K], \dist_{L_i, L}(L) = {\dec{L}}_\sharp\dist(\zzspace)$. Hence the lower bounds gracefully reduce to 0 under our encoder-decoder generative process, meaning that there is no loss of translation accuracy using universal language mapping.  

 
\subsection{Main Result: Translation between Arbitrary Pairs of Languages} 
The main theorem we prove is that if the graph $H$ capturing the pairs of languages for which we have aligned corpora is connected, given sufficiently many sentences for each pair, we will learn encoder/decoders that perform well on the unseen pairs.
Moreover, we can characterize how good the translation will be based on the distance of the languages in the graph. Concretely: 
\begin{theorem}[Sample complexity under generative model] 
Suppose $H$ is connected. 
Furthermore, suppose the trained $\{E_L\}_{L \in \langs}$ satisfy 
\begin{equation*}
\forall L,L' \in H: \emprisk_S(E_L, E_{L'}) \leq \epsilon_{L,L'},    
\end{equation*}
for $\epsilon_{L,L'} > 0$. Furthermore, for $0 < \delta < 1$ suppose the number of sentences for each aligned corpora for each training pair $(L,L')$ is $\Omega\left(\frac{1}{\epsilon_{L,L'}^2}\cdot \left(\log\covering(\Fcal, \frac{\epsilon_{L,L'}}{16M}) + \log(K/\delta)\right)\right)$. Then, with probability $1-\delta$, for any pair of languages $(L,L') \in \langs\times\langs$ and $L = L_1, L_2, \dots, L_m = L'$ a path between $L$ and $L'$ in $H$, we have
$
  \varepsilon(E_L, E_{L'}) \leq 2\rho^2 \sum_{k=1}^{m-1} \epsilon_{L_k, L_{k+1}}.  
$
\label{thm:main} 
\end{theorem}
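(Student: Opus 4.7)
My plan splits into two independent components: a uniform-convergence step that converts the empirical edge bounds $\emprisk_S(E_L, E_{L'}) \leq \epsilon_{L,L'}$ into population edge bounds, and a telescoping path argument that lifts these per-edge bounds to any pair of languages connected in $H$.

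For the generalization step, I fix an edge $(L, L') \in H$. The per-sample loss $(E_L, E_{L'}) \mapsto \|E_{L'}^{-1}\circ E_L(x) - x'\|_2^2$ is bounded by $O(M^2)$ (from boundedness of $\mathcal{F}$) and is $O(M\rho)$-Lipschitz in its two arguments under $\|\cdot\|_\infty$. I build a product cover of $\mathcal{F}\times\mathcal{F}$ at scale $\epsilon_{L,L'}/(16M)$, argue that snapping $(E_L, E_{L'})$ to the cover perturbs both $\emprisk_S$ and $\risk$ by at most $\epsilon_{L,L'}/2$, and apply Hoeffding's inequality on the cover with the stated sample size $n = \Omega(\epsilon_{L,L'}^{-2}(\log\covering(\mathcal{F}, \epsilon_{L,L'}/(16M)) + \log(K/\delta)))$. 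A union bound over the at most $K^2$ edges then gives, simultaneously for every edge, $\risk(E_L, E_{L'}) \leq 2\epsilon_{L,L'}$ with probability $\geq 1 - \delta$.

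For the path step, fix an arbitrary path $L = L_1,\ldots,L_m = L'$ in $H$ and define $T_k := E_{L_{k+1}}^{-1}\circ E_{L_k}$ and $T_k^* := \dec{L_{k+1}}\circ\enc{L_k}$. The group structure of $\mathcal{F}$ is crucial here: it guarantees that $T_k$, $T_k^*$, and every composition of them lie in $\mathcal{F}$, and are therefore $\rho$-Lipschitz (a single $\rho$, not $\rho^{m-1}$). I then telescope
\begin{equation*}
E_{L_m}^{-1}\!\circ E_{L_1} - \dec{L_m}\!\circ\enc{L_1} \;=\; \sum_{k=1}^{m-1}\bigl(R_{k-1} - R_k\bigr),
\end{equation*}
where $R_j := T_{m-1}\circ\cdots\circ T_{j+1}\circ T_j^*\circ\cdots\circ T_1^*$ is the ``hybrid'' obtained by replacing the first $j$ factors (right-to-left) with their starred analogues, so $R_0 = E_{L_m}^{-1}\circ E_{L_1}$ and $R_{m-1} = \dec{L_m}\circ\enc{L_1}$. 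Each summand factors as $F_k\circ T_k\circ G_k - F_k\circ T_k^*\circ G_k$, with $F_k := T_{m-1}\circ\cdots\circ T_{k+1}\in\mathcal{F}$ and $G_k := T_{k-1}^*\circ\cdots\circ T_1^* = \dec{L_k}\circ\enc{L_1}$. Lipschitzness of $F_k$ bounds each summand pointwise by $\rho\,\|T_k(G_k(x)) - T_k^*(G_k(x))\|_2$, and the bijectivity of the ground-truth encoders gives the change-of-measure identity ${G_k}_\sharp({\dec{L_1}}_\sharp\dist) = {\dec{L_k}}_\sharp\dist$, converting the outer integral into exactly the edge risk: $\|R_{k-1} - R_k\|_{\ell_2({\dec{L_1}}_\sharp\dist)}^2 \leq \rho^2\,\risk(E_{L_k}, E_{L_{k+1}})$. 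Combining these summand bounds with the per-edge population bound from step one via the triangle inequality in $\ell_2$, squared with a careful $(a+b)^2 \leq 2a^2+2b^2$ unrolling, delivers the claimed $2\rho^2\sum_k \epsilon_{L_k,L_{k+1}}$ bound.

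The main obstacle is the second step, and specifically preventing Lipschitz constants from accumulating along the path. Without the group-closure assumption on $\mathcal{F}$, the outer map $F_k$ would have Lipschitz constant $\rho^{m-1-k}$ and the bound would degrade exponentially with $m$; keeping a single $\rho$ factor per edge error is what ultimately makes the per-edge sample complexity independent of the path length. A secondary subtlety is the change-of-measure identity, which depends on the exact deterministic invertibility $\enc{L}\circ\dec{L} = \id$ of the generative encoders, and will require more delicate handling when the generative model is extended to the probabilistic setting described later in the section.
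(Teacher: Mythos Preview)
Your proposal is essentially the paper's proof: a covering-number/Hoeffding concentration with a union bound over the $O(K^2)$ edges of $H$, followed by a hybrid telescoping along the path, a Lipschitz pull-through of the outer map, and a change of measure onto each edge's input distribution. The paper's hybrids $I_k = E_{L_1}^{-1}\circ E_{L_k}\circ \enc{L_k}^{-1}\circ \enc{L_m}$ are, after collapsing your products, your $R_{k-1}$'s read in the opposite direction, and both reduce to comparing $E_{L_k}^{-1}\circ E_{L_{k+1}}$ with $\enc{L_k}^{-1}\circ\enc{L_{k+1}}$ on ${\dec{L_{k+1}}}_\sharp\dist$.

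One correction to your discussion of the ``main obstacle'': group closure is \emph{not} what prevents the Lipschitz constants from accumulating along the path. Your outer map $F_k = T_{m-1}\circ\cdots\circ T_{k+1}$ telescopes, via the exact cancellations $E_{L_j}\circ E_{L_j}^{-1}=\id$, to $E_{L_m}^{-1}\circ E_{L_{k+1}}$, a composition of just two maps from $\mathcal{F}$; it is therefore $\rho^2$-Lipschitz whether or not the composite itself lies in $\mathcal{F}$. This is exactly how the paper argues: it applies the $\rho$-Lipschitz assumption separately to $E_{L_1}^{-1}$ and to $E_{L_k}$, picking up a single $\rho^2$ factor without ever invoking closure at this step. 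The group structure is used only in the concentration step, to ensure that the composite $f = E_{L'}^{-1}\circ E_L$ belongs to $\mathcal{F}$ and is therefore controlled by $\covering(\mathcal{F},\cdot)$; it plays no role in keeping the path bound linear in $m$.
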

\paragraph{Remark}
We make several remarks about the theorem statement. Note that the guarantee is in terms of translation error rather than parameter recovery. In fact, due to the identifiability issue, we cannot hope to recover the ground truth encoders $\{\enc{L}\}_{L \in \langs}$: it is easy to see that composing all the encoders with an invertible mapping $f \in \Fcal$ and composing all the decoders with $f^{-1} \in \Fcal$ produces exactly the same outputs.

Furthermore, the upper bound is adaptive, in the sense that for any language pair $(L_i, L_j)$, the error depends on the sum of the errors connecting $(L_i, L_j)$ in the translation graph $H$. One can think naturally as the low-error edges as resource-rich pairs: if the function class $\Fcal$ is parametrized by finite-dimensional parameter space with dimension $p$, then using standard result on the covering number of finite-dimensional vector space~\citep{anthony2009neural}, we know that $\log\covering(\Fcal, \frac{\epsilon}{16M}) = \Theta(p\log(1/\epsilon))$; as a consequence, the number of documents needed for a pair scales as $\log(1/\epsilon_{L,L'})/\epsilon_{L,L'}^2$. 

Furthermore, as an immediate corollary of the theorem, if we assume $\epsilon_{L,L'} \leq \epsilon$ for all $(L,L') \in H$, we have $\varepsilon(E_L, E_{L'})\leq 2\rho^2 d_{L, L'}\cdot\epsilon$, where $d_{L, L'}$ is the length of the shortest path connecting $L$ and $L'$ in $H$. It also immediately follows that for any pair of languages $L, L'$, we have $\varepsilon(E_L, E_{L'}) \leq 2\rho^2 \diam(H)\cdot\epsilon$ where $\diam(H)$ is the diameter of $H$ -- thus the intuitive conclusion that graphs that do not have long paths are preferable. 

The upper bound in Theorem~\ref{thm:main} also provides a counterpoint to the lower-bound, showing that under a generative model for the data, it is possible to learn a pair of encoder/decoder for each language pair after seeing aligned corpora only for a linear number of pairs of languages (and not quadratic!), corresponding to those captured by the edges of the translation graph $H$. As a final note, we would like to point out that an analogous bound can be proved easily for other losses like the 0-1 loss or the general $\ell_p$ loss as well.
\begin{figure}[tb]
    \centering
    \includegraphics[width=0.5\linewidth]{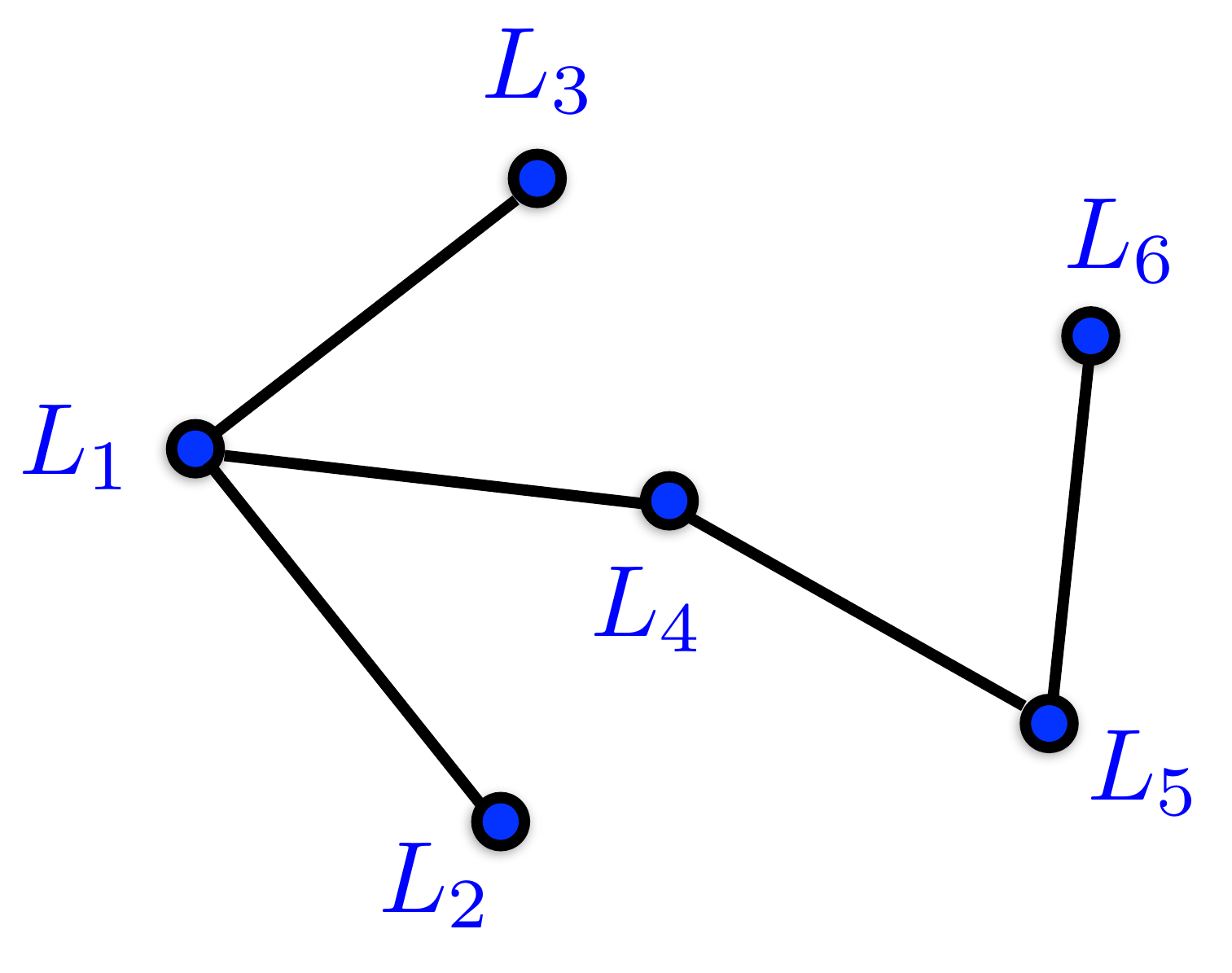}
    \caption{A translation graph $H$ over $K = 6$ languages. The existence of an edge between a pair of nodes $L_i$ and $L_j$ means that the learner has been trained on the corresponding language pair. In this example the diameter of the graph $\diam(H) = 4$: $L_3, L_1, L_4, L_5, L_6$.}
    \label{fig:tgraph}
\end{figure}

\subsection{Proof Sketch of the Theorem}
Before we provide the proof for the theorem, we first state several useful lemmas that will be used during our analysis. 
\paragraph{Concentration Bounds} 
The first step is to prove a concentration bound for the translation loss metric on each pair of languages. In this case, it will be easier to write the losses in terms of one single function: namely notice that in fact  
$\risk(E_L, E_L')$ only depends on $E^{-1}_{L'}\circ E_L$, and due to the group structure, $\mathcal{F} \ni f:=E^{-1}_{L'}\circ E_L$. To that end, we will abuse the notation somewhat and denote $\risk(f):= \risk(E_L, E_L')$. 
The following lemma is adapted from~\citet{bartlett1998thesamplecomplexityofp} where the bound is given in terms of binary classification error while here we present a bound using $\ell_2$ loss. At a high level, the bound uses covering number to concentrate the empirical loss metric to its corresponding population counterpart. 
\begin{restatable}{lemma}{ggap}
\label{thm:covering}
If $S = \{(x_i, x'_i)\}_{i=1}^n$ is sampled i.i.d.\ according to the encoder-decoder generative process, the following bound holds:
\begin{align*}
    \Pr_{S\sim \dist^n} \left(\sup_{f\in\Fcal} |\risk(f) - \emprisk_S(f)| \geq \epsilon\right) \leq 2\covering(\Fcal, \frac{\epsilon}{16M})\cdot \exp\left(\frac{-n\epsilon^2}{16M^4}\right).
\end{align*}
\end{restatable}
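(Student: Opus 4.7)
The plan is to prove a standard uniform-convergence bound over $\Fcal$ by combining three ingredients: boundedness of the per-sample loss, an $\ell_\infty$-cover of $\Fcal$, and Hoeffding's inequality. Concretely, recall that each $f\in\Fcal$ represents some composition $E_{L'}^{-1}\circ E_L$, and the per-sample loss reads $\ell(f,(x,x'))=\|f(x)-x'\|_2^2$ with $x'=\dec{L'}(z)$. I will first establish that both $\|f(x)\|_2$ and $\|x'\|_2$ are bounded by $M$ (since $\dec{L'}\in\Fcal$ by the group structure, so $\|\dec{L'}\|_\infty\leq M$), which gives the uniform bound $0\leq\ell(f,(x,x'))\leq 4M^2$.

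Next, I will fix an $(\epsilon/(16M))$-cover $\{f_1,\ldots,f_N\}\subseteq\Fcal$ of size $N=\covering(\Fcal,\epsilon/(16M))$. The central calculation is to show that for any $f\in\Fcal$ and its nearest cover element $f_i$ (so that $\|f-f_i\|_\infty\leq \epsilon/(16M)$), the loss values are uniformly close: using $a^2-b^2=(a+b)(a-b)$ together with the triangle inequality,
\begin{equation*}
\bigl|\|f(x)-x'\|_2^2-\|f_i(x)-x'\|_2^2\bigr|\leq (\|f(x)-x'\|_2+\|f_i(x)-x'\|_2)\cdot \|f(x)-f_i(x)\|_2\leq 4M\cdot \tfrac{\epsilon}{16M}=\tfrac{\epsilon}{4}.
\end{equation*}
Taking expectations and empirical averages respectively yields $|\risk(f)-\risk(f_i)|\leq \epsilon/4$ and $|\emprisk_S(f)-\emprisk_S(f_i)|\leq \epsilon/4$, so by the triangle inequality a sup-deviation of $\epsilon$ for $f$ entails a deviation of at least $\epsilon/2$ for some $f_i$ in the finite cover.

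Then I will apply Hoeffding's inequality to each fixed $f_i$: since $\ell(f_i,\cdot)\in[0,4M^2]$ and the sample $S$ is i.i.d., $\Pr(|\risk(f_i)-\emprisk_S(f_i)|\geq\epsilon/2)\leq 2\exp(-n\epsilon^2/(c\,M^4))$ for an absolute constant $c$. A union bound over the $N$ cover elements then gives
\begin{equation*}
\Pr\!\left(\sup_{f\in\Fcal}|\risk(f)-\emprisk_S(f)|\geq\epsilon\right)\leq 2\,\covering(\Fcal,\tfrac{\epsilon}{16M})\cdot\exp\!\left(-\tfrac{n\epsilon^2}{16M^4}\right),
\end{equation*}
matching the stated bound after tracking constants carefully.

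The main obstacle I anticipate is the translation from an $\ell_\infty$-covering guarantee on $\Fcal$ to closeness in the squared $\ell_2$ loss: naively, the quadratic loss is not globally Lipschitz, so I need the boundedness assumption to turn the quadratic Lipschitz factor into the finite constant $4M$ used above. The choice of covering radius $\epsilon/(16M)$ is dictated exactly by this step so that the cover-induced error consumes at most $\epsilon/2$ of the total deviation budget, leaving the other half for the Hoeffding tail on the cover. Beyond this, the remaining bookkeeping with Hoeffding and the union bound is routine, and the final constant $16M^4$ in the exponent follows by plugging the per-sample loss range $4M^2$ into Hoeffding with tolerance $\epsilon/2$.
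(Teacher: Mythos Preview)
Your proposal is correct and follows essentially the same approach as the paper. Both arguments use the boundedness assumption to turn the squared loss into a Lipschitz function (the paper phrases this as an $8M$-Lipschitz bound on $\ell_S(f)=\risk(f)-\emprisk_S(f)$, while you bound the per-sample loss difference by $4M\|f-f_i\|_\infty$; these yield the same $\epsilon/2$ slack at the chosen covering radius), then cover $\Fcal$ at radius $\epsilon/(16M)$, apply Hoeffding on each cover point with loss range $[0,4M^2]$ and tolerance $\epsilon/2$, and finish with a union bound over the cover.
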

This lemma can be proved using a $\epsilon$-net argument with covering number. With this lemma, we can bound the error given by an empirical risk minimization algorithm:
\begin{restatable}{theorem}{gsingle}(Generalization, single task)
\label{thm:finite}
Let $S$ be a sample of size $n$ according to our generative process. Then for any $0< \delta < 1$, for any $f\in\Fcal$, w.p.\ at least $1-\delta$, the following bound holds:
\begin{equation}
    \risk(f) \leq \emprisk_S(f) + O\left(\sqrt{\frac{\log\covering(\Fcal, \frac{\epsilon}{16M}) + \log(1/\delta)}{n}}\right).
\label{equ:sbound}
\end{equation}
\end{restatable}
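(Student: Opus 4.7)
The plan is to derive Theorem~\ref{thm:finite} as a direct consequence of the uniform concentration bound already established in Lemma~\ref{thm:covering}. Concretely, Lemma~\ref{thm:covering} provides a high-probability bound on $\sup_{f\in\Fcal}|\risk(f)-\emprisk_S(f)|$ in the form of an exponential tail estimate; Theorem~\ref{thm:finite} is obtained by inverting that tail in $\epsilon$, i.e., setting the failure probability to $\delta$ and solving for the deviation. Since the bound in Lemma~\ref{thm:covering} is uniform over $\Fcal$, the resulting statement automatically holds for every $f\in\Fcal$ simultaneously, which matches what Theorem~\ref{thm:finite} claims.

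First I would start from the guarantee
\[
\Pr_{S\sim\dist^n}\!\left(\sup_{f\in\Fcal}|\risk(f)-\emprisk_S(f)|\geq \epsilon\right)\leq 2\,\covering(\Fcal,\tfrac{\epsilon}{16M})\cdot\exp\!\left(-\tfrac{n\epsilon^2}{16M^4}\right),
\]
and set the right-hand side equal to $\delta$. Taking logarithms gives the implicit condition
\[
\tfrac{n\epsilon^2}{16M^4}\;\geq\;\log\covering(\Fcal,\tfrac{\epsilon}{16M})+\log(2/\delta),
\]
so that solving for $\epsilon$ yields
\[
\epsilon \;=\; O\!\left(M^{2}\sqrt{\tfrac{\log\covering(\Fcal,\epsilon/(16M))+\log(1/\delta)}{n}}\right).
\]
Absorbing the constants $M$ into the big-$O$ (as is consistent with the rest of the paper) and rearranging $|\risk(f)-\emprisk_S(f)|\leq \epsilon$ into a one-sided inequality $\risk(f)\leq\emprisk_S(f)+\epsilon$ produces exactly the bound claimed in~\eqref{equ:sbound}, and this holds simultaneously for all $f\in\Fcal$ with probability at least $1-\delta$.

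The only subtlety — and the main point deserving attention — is that $\epsilon$ appears on both sides of the inequality, since $\covering(\Fcal,\epsilon/(16M))$ depends on the very deviation we are trying to bound. This is a standard self-referential form that one handles either by (i) noting that for the usual parametric classes one has $\log\covering(\Fcal,\epsilon/(16M))=\Theta(p\log(1/\epsilon))$, which is only logarithmic in $\epsilon$ and hence can be absorbed by mildly inflating the constants in the square root, or (ii) fixing a target accuracy $\epsilon_0$ in advance, evaluating $\covering(\Fcal,\epsilon_0/(16M))$ at that scale, and verifying the implicit inequality holds. Either route yields the $O\!\left(\sqrt{(\log\covering(\Fcal,\epsilon/(16M))+\log(1/\delta))/n}\right)$ rate stated, and neither requires new probabilistic machinery beyond Lemma~\ref{thm:covering}.

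The only other technical choice to flag is the one-sided vs.\ two-sided form: Lemma~\ref{thm:covering} already controls the two-sided supremum, so the one-sided bound in Theorem~\ref{thm:finite} follows immediately — no additional symmetrization or loss of a factor is needed. Thus the proof reduces to a short inversion argument with a brief remark on the implicit $\epsilon$-dependence inside the covering-number term.
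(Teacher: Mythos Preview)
Your proposal is correct and matches the paper's approach exactly: the paper's proof is a one-liner stating that the theorem follows directly from Lemma~\ref{thm:covering} by setting the tail bound equal to $\delta$ and solving for $\epsilon$. Your discussion of the implicit $\epsilon$-dependence in the covering number and the one-sided vs.\ two-sided point is more careful than the paper itself, but the core argument is identical.
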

Theorem~\ref{thm:finite} is a finite sample bound for generalization on a single pair of languages. This bound gives us an error measure on an edge in the translation graph in Fig.~\ref{fig:tgraph}. Now, with an upper bound on the translation error of each \emph{seen} language pair, we are ready to prove the main theorem (Theorem~\ref{thm:main}) which bounds the translation error for all possible pairs of translation tasks:

\begin{proof}[Proof of Theorem \ref{thm:main}]
First, under the assumption of Theorem~\ref{thm:main}, for any pair of language $(L, L')$, we know that the corpus contains at least $\Omega\left(\frac{1}{\epsilon_{L,L'}^2}\cdot \left(\log\covering(\Fcal, \frac{\epsilon_{L,L'}}{16M}) + \log(K/\delta)\right)\right)$ parallel sentences. Then by Theorem \ref{thm:finite}, with probability $1-\delta$, for any $L,L'$ connected by an edge in $H$, we have
\begin{equation*}
    \risk(E_L, E_{L'}) \leq \emprisk(E_L, E_{L'}) + \epsilon_{L, L'} \leq \epsilon_{L, L'} + \epsilon_{L, L'} = 2\epsilon_{L, L'},
\end{equation*}
where the last inequality is due to the assumption that $\emprisk(E_L, E_{L'})\leq \epsilon_{L, L'}$. Now consider any $L,L' \in \langs\times\langs$, connected by a path 
\begin{equation*}
L' = L_1, L_2, L_3, \ldots, L_m = L
\end{equation*}
of length at most $m$. We will bound the error 
\begin{equation*}
\varepsilon(E_L,E_{L'}) = \|E^{-1}_{L'}\circ E_L - \mathbf{E}^{-1}_{L'}\circ \mathbf{E}_L\|_{\ell_2({\dec{L}}_\sharp\dist)}^2
\end{equation*}
by a judicious use of the triangle inequality. 
Namely, let's denote 
\begin{align*} 
I_1 &\defeq \mathbf{E}^{-1}_{L_1}\circ \mathbf{E}_{L_m},\\
I_{k}&\defeq E^{-1}_{L_1} \circ E_{L_k} \circ \mathbf{E}^{-1}_{L_k} \circ \mathbf{E}_{L_m},\qquad 2 \leq k \leq m-1,\\  
I_{m}&\defeq E_{L_1}^{-1} \circ  E_{L_m}.
\end{align*} 
Then, we can write 
\begin{align}
\|E^{-1}_{L'}\circ E_L - \enc{L'}^{-1}\circ \enc{L}\|_{\ell_2({\dec{L}}_\sharp\dist)} = \|\sum_{k=1}^{m-1} I_k-I_{k+1}\|_{\ell_2({\dec{L}}_\sharp\dist)} \leq \sum_{k=1}^{m-1} \|I_k-I_{k+1}\|_{\ell_2({\dec{L}}_\sharp\dist)}.
\label{eq:triangle}
\end{align}
Furthermore, notice that we can rewrite $I_k-I_{k+1}$ as 
\begin{align*}
E^{-1}_{L_1} \circ E_{L_k} \left(\enc{L_k}^{-1} \circ \enc{L_{k+1}} - E^{-1}_{L_k} \circ E_{L_{k+1}}\right) 
\mathbf{E}^{-1}_{L_{k+1}}\circ\mathbf{E}_{L_{m}}.
\end{align*}
Given that $E^{-1}_{L_1}$ and $E_{L_k}$ are $\rho$-Lipschitz we have
{\small
\begin{align*}
\left\|I_k-I_{k+1}\right\|_{\ell_2({\dec{L}}_\sharp\dist)} &= \left\|E^{-1}_{L_1} \circ E_{L_k} \left(E^{-1}_{L_k} \circ E_{L_{k+1}} - \mathbf{E}^{-1}_{L_k} \circ \mathbf{E}_{L_{k+1}}\right) 
\right\|_{\ell_2({\dec{L_{k+1}}}_\sharp\dist)} \\ 
&\leq \rho^2 \left\| \left(E^{-1}_{L_k} \circ E_{L_{k+1}} - \mathbf{E}^{-1}_{L_k} \circ \mathbf{E}_{L_{k+1}}\right) 
\right\|_{\ell_2({\dec{L_{k+1}}}_\sharp\dist)}\\
&\leq 2\rho^2\epsilon_{L_k,L_{k+1}},
\end{align*}}
where the first line is from the definition of pushforward distribution, the second line is due to the Lipschitzness of $\Fcal$ and the last line follows since all $(L_k, L_{k+1})$ are edges in $H$. Plugging this into \eqref{eq:triangle}, we have 
\begin{equation*}
  \|E^{-1}_{L'}\circ E_L - \enc{L'}^{-1}\circ \enc{L}\|_{\ell_2({\dec{L}}_\sharp\dist)} \leq 2\rho^2 \sum_{k=1}^m \epsilon_{L_k,L_{k+1}}.  
\end{equation*}
To complete the proof, realize that we need the events $|\risk(L_k, L_{k+1}) - \emprisk(L_k, L_{k+1})|\leq\epsilon_{L_k, L_{k+1}}$ to hold simultaneously for all the edges in the graph $H$. Hence it suffices if we can use a union bound to bound the failing probability. To this end, for each edge, we amplify the success probability by choosing the failure probability to be $\delta/K^2$, and we can then bound the overall failure probability as:
\begin{align*}
\Pr&\left(\text{At least one edge in the graph $H$ fails to satisfy~\eqref{equ:sbound}}\right) \\
&\leq \sum_{(i, j)\in H}\Pr\left(|\risk(L_i, L_j) - \emprisk(L_i, L_j)| > \epsilon_{L_i, L_j}\right) \\
&\leq \sum_{(i, j)\in H}\delta / K^2\\
&\leq \frac{K(K-1)}{2}\cdot\frac{\delta}{K^2} \\
&\leq \delta.
\end{align*}
The first inequality above is due to the union bound, and the second one is from Theorem~\ref{thm:finite} by choosing the failing probability to be $\delta / K^2$. 
\end{proof}

\subsection{Extension to Randomized Encoders and Decoders}
\label{s:randomized}
Our discussions so far on the sample complexity under the encoder-decoder generative process assume that the ground-truth encoders and decoders are \emph{deterministic} and \emph{bijective}. This might seem to be a quite restrictive assumption, but nevertheless our underlying proof strategy using transitions on the translation graph still works in more general settings. In this section we shall provide an extension of the previous deterministic encoder-decoder generative process to allow randomness in the generation process. Note that this extension simultaneously relaxes both the deterministic and bijective assumptions before. 

As a first step of the extension, since there is not a notion of inverse function anymore in the randomized setting, we first define the ground-truth encoder-decoder pair $(\enc{L}, \dec{L})$ for a language $L\in\langs$.
\begin{definition}
Let $\dist_r$ and $\dist_{r'}$ be two distributions over random seeds $r$ and $r'$ respectively. A randomized decoder $\dec{L_i}$ is a deterministic function that maps a feature $z$ along with a random seed $r$ to a sentences in language $L_i$. Similarly, a randomized encoder $\enc{L_i}$ maps a sentence $x\in\Sigma_{L_i}^*$ and a random seed $r'$ to a representation in $\zzspace$. $(\enc{L_i}, \dec{L_i})$ is called an encoder-decoder pair if it keeps the distribution $\dist$ over $\zzspace$ invariant under the randomness of $\dist_r$ and $\dist_{r'}$:
\begin{equation}
{\enc{L_i}}_\sharp({\dec{L_i}}_\sharp(\dist\times \dist_r)\times \dist_{r'}) = \dist,
\end{equation}
where we use $\dist\times \dist'$ to denote the product measure of distributions $\dist$ and $\dist'$.
\label{def:random}
\end{definition}
Just like the deterministic setting, here we still assume that $\enc{L_i}, \dec{L_i}\in\Fcal$ where $\Fcal$ is closed under function composition. Furthermore, in order to satisfy Definition~\ref{def:random}, we assume that $\forall~\dec{L_i}\in \Fcal$, there exists a corresponding $\enc{L_i}\in \Fcal$, such that $(\enc{L_i}, \dec{L_i})$ is an encoder-decoder pair that verifies Definition~\ref{def:random}. It is clear that the deterministic encoder-decoder pair in Section~\ref{sec:deterministic} is a special case of that in Definition~\ref{def:random}: in that case $\dec{L_i} = \enc{L_i}^{-1}$ so that $\enc{L_i}\circ \dec{L_i} = \id_\zzspace$, the identity map over feature space $\zzspace$. Furthermore there is no randomness from $r$ and $r'$, hence the invariant criterion becomes ${\enc{L_i}}_\sharp{\dec{L_i}}_\sharp\dist = ({\enc{L_i}}\circ {\dec{L_i}})_\sharp\dist = {\id_\zzspace}_\sharp\dist = \dist$, which trivially holds. 

The randomness mechanism in Definition~\ref{def:random} has several practical implementations in practice. For example, the denoising autoencoder~\citep{vincent2008extracting}, the encoder part of the conditional generative adversarial network~\citep{mirza2014conditional}, etc. Again, in the randomized setting we still need to have an assumption on the structure of $\Fcal$, but this time a relaxed one:
\begin{assumption}[Smoothness and Boundedness]
$\Fcal$ is bounded under the $\|\cdot\|_\infty$ norm, i.e., there exists $M > 0$, such that $\forall f\in\Fcal$, $\|f\|_\infty\leq M$. Furthermore, there exists $0\leq \rho < \infty$, such that for $\forall x, x'\in\RR^d$, $\forall f\in\Fcal$, $\|\Exp_{\dist_r}[f(x, r) - f(x', r)]\|_2\leq \rho\cdot \|x - x'\|_2$.
\end{assumption}
Correspondingly, we also need to slightly extend our loss metric under the randomized setting to the following:
\begin{equation*}
\small
\risk(E_L, D_{L'})\defeq \Exp_{r,r'}\|D_{L'}\circ E_L - \dec{L'}\circ \enc{L}\|_{\ell_2({\dec{L}}_\sharp(\dist\times \dist_r))}^2,
\end{equation*}
where the expectation is taken over the distributions over random seeds $r$ and $r'$. The empirical error could be extended in a similar way by replacing the population expectation with the empirical expectation. With the above extended definitions, now we are ready to state the following generalization theorem under randomized setting:
\begin{restatable}{theorem}{random}(Sample complexity under generative model, randomized setting)
Suppose $H$ is connected and the trained $\{E_L\}_{L \in \langs}$ satisfy 
\begin{equation*}
\forall L,L' \in H: \emprisk_S(E_L, D_{L'}) \leq \epsilon_{L,L'},    
\end{equation*}
for $\epsilon_{L,L'} > 0$. Furthermore, for $0 < \delta < 1$ suppose the number of sentences for each aligned corpora for each training pair $(L,L')$ is $\Omega\left(\frac{1}{\epsilon_{L,L'}^2}\cdot \left(\log\covering(\Fcal, \frac{\epsilon_{L,L'}}{16M}) + \log(K/\delta)\right)\right)$. Then, with probability $1-\delta$, for any pair of languages $(L,L') \in \langs\times\langs$ and $L = L_1, L_2, \dots, L_m = L'$ a path between $L$ and $L'$ in $H$, we have $\varepsilon(E_L, D_{L'}) \leq 2\rho^2 \sum_{k=1}^{m-1} \epsilon_{L_k, L_{k+1}}$.
\label{thm:random} 
\end{restatable}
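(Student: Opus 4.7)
The plan is to mirror the two-step structure of the proof of Theorem~\ref{thm:main}---per-edge concentration followed by a telescoping along a path in $H$---and to replace every use of the pointwise identity $\enc{L}\circ\enc{L}^{-1}=\id$ by the distributional invariance of Definition~\ref{def:random}. First, I would establish a randomized analogue of Lemma~\ref{thm:covering} and Theorem~\ref{thm:finite}: for any single training edge $(L,L')\in H$, with probability at least $1-\delta/K^2$ the empirical and population risks differ by at most $\epsilon_{L,L'}$. The $\epsilon$-net argument over $\Fcal$ goes through essentially verbatim, since the boundedness and (relaxed) smoothness assumptions on $\Fcal$ are still in force; the additional expectations over the random seeds $r,r'$ are simply absorbed into the underlying sample space, so the concentration is still controlled by $\log\covering(\Fcal,\epsilon_{L,L'}/(16M))+\log(K/\delta)$. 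A union bound over the at most $K(K-1)/2$ training edges then yields simultaneous concentration across the entire graph with total failure probability at most $\delta$.

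For the telescoping step, fix a path $L=L_1,L_2,\dots,L_m=L'$ in $H$ and define the interpolating random compositions
\begin{align*}
I_1 &:= \dec{L_1}\circ\enc{L_m},\\
I_k &:= D_{L_1}\circ E_{L_k}\circ \dec{L_k}\circ\enc{L_m},\quad 2\le k\le m-1,\\
I_m &:= D_{L_1}\circ E_{L_m},
\end{align*}
where all random seeds are drawn independently. In the deterministic proof the endpoints literally coincided with the learned and ground-truth translators via $E_L\circ E_L^{-1}=\id$; here I would instead show $\Exp_{r,r'}\|I_1-\dec{L_1}\circ\enc{L_m}\|_{\ell_2({\dec{L_m}}_\sharp(\dist\times\dist_r))}^2=0$ and the analogous equality for $I_m$, using the pushforward invariance ${\enc{L}}_\sharp({\dec{L}}_\sharp(\dist\times\dist_r)\times\dist_{r'})=\dist$ from Definition~\ref{def:random}. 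A triangle inequality then bounds the target norm by $\sum_{k=1}^{m-1}\|I_k-I_{k+1}\|$. For each $k$, the difference $I_k-I_{k+1}$ factors into an outer block $D_{L_1}\circ E_{L_k}$, the single-edge discrepancy $\dec{L_k}\circ\enc{L_{k+1}}-D_{L_k}\circ E_{L_{k+1}}$, and an inner block $\dec{L_{k+1}}\circ\enc{L_m}$; applying the relaxed Lipschitz condition to the outer block and using the invariance of $\dist$ under $\enc{L_{k+1}}\circ\dec{L_{k+1}}$ to rewrite the integration measure from ${\dec{L_m}}_\sharp(\dist\times\dist_r)$ as ${\dec{L_{k+1}}}_\sharp(\dist\times\dist_r)$ reduces everything to the single-edge concentration guarantee, giving $\|I_k-I_{k+1}\|\le 2\rho^2\epsilon_{L_k,L_{k+1}}$.

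The principal obstacle will be this last estimate: in the deterministic setting Lipschitzness provides a pointwise inequality, whereas the relaxed assumption only controls $\|\Exp_r[f(x,r)-f(x',r)]\|_2$, while the loss integrates $\|f(x,r)-f(x',r)\|_2^2$ over $r$. I would bridge this gap by pulling $\Exp_{r,r'}$ outside the squared norm via Jensen's inequality at the right moment, and by exploiting the linearity of function composition to interchange the expectation with the outer Lipschitz block, so that the factor $\rho^2$ emerges cleanly and the remaining integrand is exactly the single-edge error that concentration controls. Summing $2\rho^2\epsilon_{L_k,L_{k+1}}$ along the path and combining with the union bound from the concentration step then yields $\risk(E_L,D_{L'})\le 2\rho^2\sum_{k=1}^{m-1}\epsilon_{L_k,L_{k+1}}$, completing the proof of Theorem~\ref{thm:random}.
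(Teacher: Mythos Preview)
Your proposal follows the same two-stage strategy as the paper's proof sketch: (i) reduce the per-edge concentration to the deterministic case by absorbing the random seeds $r,r'$ into the underlying sample space, so that the covering-number argument of Lemma~\ref{thm:covering} and Theorem~\ref{thm:finite} goes through verbatim; then (ii) telescope along a path in $H$ via interpolants $I_k$, combined with a union bound over the at most $K(K-1)/2$ training edges. The paper's own treatment of step (ii) is in fact terser than yours---it simply asserts that ``by the linearity of the expectation $\Exp_{r,r'}$'' the chaining argument of Theorem~\ref{thm:main} carries over after wrapping everything in an outer expectation, without explicitly invoking the distributional invariance of Definition~\ref{def:random} or isolating the Lipschitz mismatch you flag.

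One caution on the obstacle you identify: the Jensen step you propose appears to point the wrong way. For the convex map $v\mapsto\|v\|_2^2$ one has $\|\Exp_r X(r)\|_2^2\le\Exp_r\|X(r)\|_2^2$, so Jensen cannot by itself convert the relaxed hypothesis (a bound on $\|\Exp_r[f(x,r)-f(x',r)]\|_2$) into control of the loss $\Exp_r\|f(x,r)-f(x',r)\|_2^2$ that appears in $\risk$. The paper does not resolve this point any more carefully than you do---it appeals only to ``linearity of expectation''---so your plan matches the paper's, but be aware that both arguments leave this particular step underspecified rather than fully justified.
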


We comment that Theorem~\ref{thm:random} is completely parallel to Theorem~\ref{thm:main}, except that we use generalized definitions under the randomized setting instead. Hence all the discussions before on Theorem~\ref{thm:main} also apply here.

\section{Discussion and Conclusion}
In this paper we provided the first theoretical study on using language-invariant representations for universal machine translation. Our results are two-fold. First, we showed that without appropriate assumption on the generative structure of languages, there is an inherent tradeoff between learning language-invariant representations versus achieving good translation performance jointly in general. In particular, our results show that if the distributions (language models) of the target language differ between different translation pairs, then any machine translation method based on learning language-invariant representations is bound to achieve a large error on at least one of the translation tasks, even with unbounded computational resources. 

On the positive side, we also show that, under appropriate generative model assumption of languages, e.g., a typical encoder-decoder model, it is not only possible to recover the ground-truth translator between any pair of languages that appear in the parallel corpora, but also we can hope to achieve a small translation error on sentences from unseen pair of languages, as long as they are connected in the so-called translation graph. This result holds in both deterministic and randomized settings. In addition, our result also characterizes how the relationship (distance) between these two languages in the graph affects the quality of translation in an intuitive manner: a graph with long connections results in a poorer translation.  

\section*{Acknowledgements}
We would like to thank Tom Mitchell for helpful conversations in the initial stages of the project, and Jiatao Gu for useful discussions on the recent progress in universal machine translation. HZ would like to acknowledge support from the DARPA XAI project, contract \#FA87501720152 and NVIDIA's GPU grant. JH is sponsored by the Air Force Research Laboratory under agreement number FA8750-19-2-0200.

\bibliography{reference.bib}
\bibliographystyle{icml2020}

\newpage
\appendix

\section{Missing Proofs in Section~\ref{sec:positive}}
In this section we provide all the missing proofs in Section~\ref{sec:positive}. Again, in what follows we will first restate the corresponding theorems for the ease of reading and then provide the detailed proofs. 
\ggap*
\begin{proof}
For $f\in\Fcal$, define $\ell_S(f)\defeq \risk(f) - \emprisk_S(f)$ to be the generalization error of $f$ on sample $S$. The first step is to prove the following inequality holds for $\forall f_1, f_2\in\Fcal$ and any sample $S$:
\begin{equation*}
    |\ell_S(f_1) - \ell_S(f_2)| \leq 8M\cdot\|f_1 - f_2\|_\infty.
\end{equation*}
In other words, $\ell_S(\cdot)$ is a Lipschitz function in $\Fcal$ w.r.t.\ the $\ell_\infty$ norm. To see, by definition of the generalization error, we have
\begin{align*}
    &|\ell_S(f_1) - \ell_S(f_2)| \\
    &= |\risk(f_1) - \emprisk_S(f_1) - \risk(f_2) + \emprisk_S(f_2)| \\
    &\leq |\risk(f_1) - \risk(f_2)| + |\emprisk_S(f_1) - \emprisk_S(f_2)|.
\end{align*}
To get the desired upper bound, it suffices for us to bound $|\risk(f_1) - \risk(f_2)|$ by $\|f_1 - f_2\|_\infty$ and the same technique could be used to upper bound $|\emprisk_S(f_1) - \emprisk_S(f_2)|$ since the only difference lies in the measure where the expectation is taken over. We now proceed to upper bound $|\risk(f_1) - \risk(f_2)|$:
\begin{align*}
    |\risk(f_1) - \risk(f_2)| &= \big|\Exp_{\zz\sim\dist}[\|f_1(\xx) - \xx'\|_2^2] - \Exp_{\zz\sim\dist}[\|f_2(\xx) - \xx'\|_2^2]\big| \\
    &= \big|\Exp_{\zz\sim\dist}[\|f_1(\xx)\|_2^2 - \|f_2(\xx)\|_2^2 - 2\xx'^T(f_1(\xx) - f_2(\xx))]\big| \\
    &\leq \Exp_{\zz\sim\dist}\big|(f_1(\xx) - f_2(\xx))^T(f_1(\xx) + f_2(\xx)) - 2\xx'^T(f_1(\xx) - f_2(\xx))\big| \\
    &\leq \Exp_{\zz\sim\dist}\left[\big|(f_1(\xx) - f_2(\xx))^T(f_1(\xx) + f_2(\xx))\big|\right] +2\Exp_{\zz\sim\dist}\left[\big|\xx'^T(f_1(\xx) - f_2(\xx))\big|\right] \\
    &\leq \Exp_{\zz\sim\dist}\left[\|f_1(\xx) - f_2(\xx)\|\cdot\|f_1(\xx) + f_2(\xx)\|\right] + 2\Exp_{\zz\sim\dist}\left[\|\xx'\|\cdot\|f_1(\xx) - f_2(\xx)\|\right] \\
    &\leq 2M\Exp_{\zz\sim\dist}\left[\|f_1(\xx) - f_2(\xx)\|\right] + 2M\Exp_{\zz\sim\dist}\left[\|f_1(\xx) - f_2(\xx)\|\right] \\
    &\leq  4M\|f_1 - f_2\|_\infty.
\end{align*}
In the proof above, the first inequality holds due to the monotonicity property of integral. The second inequality holds by triangle inequality. The third one is due to Cauchy-Schwarz inequality. The fourth inequality holds by the assumption that $\forall f\in\Fcal$, $\max_{x\in\xxspace}\|f(\xx)\|\leq M$ and the identity mapping is in $\Fcal$ so that $\|\xx'\| = \|\id(\xx')\| \leq \|\id(\cdot)\|_\infty \leq M$. The last one holds due to the monotonicity property of integral. 

It is easy to see that the same argument could also be used to show that $|\emprisk_S(f_1) - \emprisk_S(f_2)|\leq 4M\|f_1 - f_2\|_\infty$. Combine these two inequalities, we have
\begin{align*}
    |\ell_S(f_1) - \ell_S(f_2)| &\leq |\risk(f_1) - \risk(f_2)| + |\emprisk_S(f_1) - \emprisk_S(f_2)| \\
    &\leq  8M\|f_1 - f_2\|_\infty.
\end{align*}

In the next step, we show that suppose $\Fcal$ could be covered by $k$ subsets $\Ccal_1,\ldots,\Ccal_k$, i.e., $\Fcal = \cup_{i\in[k]}\Ccal_i$. Then for any $\epsilon > 0$, the following upper bound holds:
\begin{equation*}
    \Pr_{S\sim\dist^n}\big(\sup_{f\in\Fcal}|\ell_S(f)| \geq \epsilon\big) \leq \sum_{i\in[k]}\Pr_{S\sim\dist^n}\big(\sup_{f\in\Ccal_i}|\ell_S(f)| \geq \epsilon\big).
\end{equation*}
This follows from the union bound:
\begin{align*}
    \Pr_{S\sim\dist^n}\big(\sup_{f\in\Fcal}|\ell_S(f)| \geq \epsilon\big) &= \Pr_{S\sim\dist^n}\big(\bigcup_{i\in[k]}\sup_{f\in\Ccal_i}|\ell_S(f)| \geq \epsilon \big) \\
    &\leq \sum_{i\in[k]}\Pr_{S\sim\dist^n}\big(\sup_{f\in\Ccal_i}|\ell_S(f)| \geq \epsilon\big).
\end{align*}
Next, within each $L_\infty$ ball $\Ccal_i$ centered at $f_i$ with radius $\frac{\epsilon}{16M}$ such that $\Fcal\subseteq \cup_{i\in[k]}\Ccal_i$, we bound each term in the above union bound as:
\begin{equation*}
    \Pr_{S\sim\dist^n}\big(\sup_{f\in\Ccal_i}|\ell_S(f)| \geq \epsilon\big) \leq \Pr_{S\sim\dist^n}\big(|\ell_S(f_i)| \geq \epsilon / 2\big).
\end{equation*}
To see this, realize that $\forall f\in\Ccal_i$, we have $\|f - f_i\|_\infty \leq \epsilon/16M$, which implies
\begin{equation*}
    |\ell_S(f) - \ell_S(f_i)| \leq 8M\|f - f_i\|_\infty \leq \frac{\epsilon}{2}.
\end{equation*}
Hence we must have $|\ell_S(f_i)| \geq \epsilon / 2$, otherwise $\sup_{f\in\Ccal_i}|\ell_S(f)| < \epsilon$. This argument means that 
\begin{equation*}
\Pr_{S\sim\dist^n}\big(\sup_{f\in\Ccal_i}|\ell_S(f)| \geq \epsilon\big) \leq \Pr_{S\sim\dist^n}\big(|\ell_S(f_i)| \geq \epsilon / 2\big).    
\end{equation*}
To finish the proof, we use the standard Hoeffding inequality to upper bound $\Pr_{S\sim\dist^n}\big(|\ell_S(f_i)| \geq \epsilon / 2\big)$ as follows:
\begin{align*}
    \Pr_{S\sim\dist^n}\big(|\ell_S(f_i)| \geq \epsilon / 2\big) &= \Pr_{S\sim\dist^n}\big(|\risk(f_i) - \emprisk_S(f_i)| \geq \epsilon / 2\big) \\
    &\leq 2\exp\left(-\frac{2n^2(\epsilon/2)^2}{n((2M)^2 - 0)^2}\right) \\
    &= 2\exp\left(-\frac{n\epsilon^2}{16M^4}\right).
\end{align*}
Now combine everything together, we obtain the desired upper bound as stated in the lemma.
\begin{equation*}
    \Pr_{S\sim \dist^n}\left(\sup_{f\in\Fcal} |\risk(f) - \emprisk_S(f)| \geq \epsilon\right) \leq 2\covering(\Fcal, \frac{\epsilon}{16M})\cdot\exp\left(\frac{-n\epsilon^2}{16M^4}\right).\qedhere
\end{equation*}
\end{proof}

We next prove the generalization bound for a single pair of translation task:
\gsingle*
\begin{proof}
This is a direct corollary of Lemma~\ref{thm:covering} by setting the upper bound in Lemma~\ref{thm:covering} to be $\delta$ and solve for $\epsilon$. 
\end{proof}

We now provide the proof sketch of Theorem~\ref{thm:random}. The main proof idea is exactly the same as the one we have in the deterministic setting, except that we replace the original definitions of errors and Lipschitzness with the generalized definitions under the randomized setting. 
\random*
\begin{proof}[Proof Sketch]
The first step is prove the corresponding error concentration lemma using covering numbers as the one in Lemma~\ref{thm:covering}. Again, due to the assumption that $\Fcal$ is closed under composition, we have $D_{L'}\circ E_{L}\in\Fcal$, hence it suffices if we could prove a uniform convergence bound for an arbitrary function $f\in\Fcal$. To this end, for $f\in\Fcal$, define $\ell_S(f)\defeq \risk(f) - \emprisk_S(f)$ to be the generalization error of $f$ on sample $S$. The first step is to prove the following inequality holds for $\forall f_1, f_2\in\Fcal$ and any sample $S$:
\begin{equation*}
    |\ell_S(f_1) - \ell_S(f_2)| \leq 8M\cdot\|f_1 - f_2\|_\infty.
\end{equation*}
In other words, $\ell_S(\cdot)$ is a Lipschitz function in $\Fcal$ w.r.t.\ the $\ell_\infty$ norm. To see this, by definition of the generalization error, we have
\begin{align*}
    |\ell_S(f_1) - \ell_S(f_2)| = |\risk(f_1) - \emprisk_S(f_1) - \risk(f_2) + \emprisk_S(f_2)| \leq |\risk(f_1) - \risk(f_2)| + |\emprisk_S(f_1) - \emprisk_S(f_2)|.
\end{align*}
To get the desired upper bound, it suffices for us to bound $|\risk(f_1) - \risk(f_2)|$ by $\|f_1 - f_2\|_\infty$ and the same technique could be used to upper bound $|\emprisk_S(f_1) - \emprisk_S(f_2)|$ since the only difference lies in the measure where the expectation is taken over. 

Before we proceed, in order to make the notation uncluttered, we first simplify $\risk(f)$:
\begin{align*}
    \risk(f) &= \Exp_{r,r'}\left[\|f - \dec{L'}\circ \enc{L}\|_{\ell_2({\dec{L}}_\sharp(\dist\times \dist_r))}^2\right].
\end{align*}
Define $\zz\sim\dist$ to mean the sampling process of $(x, r, r')\sim {\dec{L}}_\sharp(\dist\times \dist_r)\times D_r\times D_{r'}$, $\xx\defeq (x, r, r')$ and $\xx'\defeq \dec{L'}(\enc{L}(x, r'), r)$. Then 
\begin{align*}
    \risk(f) &= \Exp_{r,r'}\left[\|f - \dec{L'}\circ \enc{L}\|_{\ell_2({\dec{L}}_\sharp(\dist\times \dist_r))}^2\right] \\
             &= \Exp_{\zz\sim\dist}[\|f(\xx) - \xx'\|_2^2].
\end{align*}
With the simplified notation, it is now clear that we essentially reduce the problem in the randomized setting to the original one in the deterministic setting. Hence by using exactly the same proof as the one of Lemma~\ref{thm:covering}, we can obtain the following high probability bound:
\begin{equation*}
    \Pr\left(\sup_{f\in\Fcal} |\risk(f) - \emprisk_S(f)| \geq \epsilon\right) \leq 2\covering(\Fcal, \frac{\epsilon}{16M})\cdot\exp\left(\frac{-n\epsilon^2}{16M^4}\right).
\end{equation*}
As a direct corollary, a similar generalization bound for a single pair of translation task like the one in Theorem~\ref{thm:finite} also holds. To finish the proof, by the linearity of the expectation $\Exp_{r, r'}$, it is clear that exactly the same chaining argument in the proof of Theorem~\ref{thm:main} could be used as well as the only thing we need to do is to take an additional expectation $\Exp_{r,r'}$ at the most outside level.
\end{proof}

\end{document}